\documentclass[pdflatex,sn-mathphys-num]{sn-jnl}

\usepackage{graphicx}%
\usepackage{multirow}%
\usepackage{amsmath,amssymb,amsfonts}%
\usepackage{amsthm}%
\usepackage{mathrsfs}%
\usepackage[title]{appendix}%
\usepackage{xcolor}%
\usepackage{textcomp}%
\usepackage{manyfoot}%
\usepackage{booktabs}%
\usepackage{algorithm}%
\usepackage{algorithmicx}%
\usepackage{algpseudocode}%
\usepackage{listings}%
\usepackage{enumerate}

\theoremstyle{thmstyleone}%
\newtheorem{theorem}{Theorem}
\newtheorem{proposition}[theorem]{Proposition}%
\newtheorem{lemma}[theorem]{Lemma}
\newtheorem{corollary}[theorem]{Corollary}

\theoremstyle{thmstyletwo}%
\newtheorem{example}{Example}%
\newtheorem{remark}{Remark}%

\theoremstyle{thmstylethree}%
\newtheorem{definition}{Definition}%

\begin{document}

\title[An order-oriented approach to scoring hesitant fuzzy elements]{An order-oriented approach to scoring hesitant fuzzy elements}


\author[1,2]{\fnm{Luis} \sur{Merino}}\email{lmerino@ugr.es}
\equalcont{These authors contributed equally to this work.}

\author*[1,2]{\fnm{Gabriel} \sur{Navarro}}\email{gnavarro@ugr.es}

\author[1]{\fnm{Carlos} \sur{Salvatierra}}\email{carlosss@ugr.es}
\equalcont{These authors contributed equally to this work.}

\author[1,2]{\fnm{Evangelina} \sur{Santos}}\email{esantos@ugr.es}
\equalcont{These authors contributed equally to this work.}

\affil*[1]{\orgdiv{Department of Algebra}, \orgname{University of Granada}, \orgaddress{ \city{Granada}, \postcode{18071}, \country{Spain}}}

\affil*[2]{\orgdiv{IMAG}, \orgname{University of Granada}, \orgaddress{ \city{Granada}, \postcode{18071}, \country{Spain}}}

\abstract{Traditional scoring approaches on hesitant fuzzy sets often lack a formal base in order theory. This paper proposes a unified framework, where each score is explicitly defined with respect to a given order. This order-oriented perspective enables more flexible and coherent scoring mechanisms. We examine several classical orders on hesitant fuzzy elements, that is, nonempty subsets in $[0,1]$, and show that, contrary to prior claims, they do not induce lattice structures. In contrast, we prove that the scores defined with respect to the symmetric order satisfy key normative criteria for scoring functions, including strong monotonicity with respect to unions and the Gärdenfors condition. 

Following this analysis,  we introduce a class of functions, called dominance functions, for ranking hesitant fuzzy elements. They aim to compare hesitant fuzzy elements relative to control sets incorporating minimum acceptability thresholds. Two concrete examples of dominance functions for finite sets are provided: the discrete dominance function and the relative dominance function. We show that these can be employed to construct fuzzy preference relations on typical hesitant fuzzy sets and support group decision-making.}

\keywords{score functions, ranking methods, hesitant fuzzy sets, dominance functions}



\maketitle

\section{Introduction}

Hesitant fuzzy sets (HFSs), introduced by Torra \cite{Torra2009, Torra2010}, have proven to be a valuable tool to handle situations characterized by uncertainty and hesitation. HFSs are especially effective in contexts like multi-criteria decision making. Unlike traditional fuzzy sets (FSs) \cite{Zadeh1965}, which represent membership degrees with single values, HFSs allow a set of possible values as the membership degree. Such a set of possible values is called a hesitant fuzzy element (HFE), which is a nonempty subset of \([0,1]\). Nevertheless, unlike FSs, the lack of a total order in the class of all possible membership degrees complicates the comparison between several objects. Although some partial orders, as those introduced in \cite{JMNS22}, give some options for this task, these do not solve this problem completely. In this context, scores came out as an alternative. 

Since their initial proposal by Xu and Xia \cite{XuXia11}, and subsequent refinements by Farhadinia \cite{Farhadinia2013, Farhadinia14, Farhadinia16}, they have become as essential tools across a broad range of practical applications. Originally, scores were considered for  defininng systematic ranking methods over finite subsets of \([0,1]\), known as typical hesitant fuzzy elements (THFEs) \cite[Definition 3.1]{Bedregal-neg}. Nevertheless, recent advances by Torra and Narukawa \cite{Narukawa21, Narukawa22} have further expanded scores applications beyond the typical cases, introducing methods for aggregating hesitant information through generalized integrals. Notable applications of scores also include clustering analysis \cite{Chen13}, multi-criteria decision making \cite{Mishra21, Thillai16, Wang2022, Wei2016}, and group decision-making processes \cite{Chen2013, XuXia011, XuXia13, Yu13}.

It is important to distinguish between ranking methods (also known as preference or dominance relations~\cite{Davis1972, Nitzan1976}) and real order relations on HFEs. Orders on HFEs are algebraic structures that satisfy reflexivity, antisymmetry, and transitivity. Such orders have been extensively developed and discussed in works such as~\cite{Bedregal-neg, Bedregal14, Garmendia17, Li2017, Matze, Perez15, Wang16, Wang2016, XuXia2011, XuXia11, Zhang16}. In contrast, ranking methods are relations that may lack antisymmetry or transitivity, and are thus not proper orders. As shown in examples in \cite{Chen2013, Farhadinia13, Farhadinia16, Zhang2018HFPR, Zhu2014}, most ranking techniques are implicitly based on scores. For instance, the combination of mean and deviation scores proposed in~\cite{Chen2014, Liao2014, Wang19}, or the lexicographic method in~\cite[Section 3]{Farhadinia16}, illustrate how score-based approaches often lead to comparisons that do not induce a total or even a partial order. This has generated ambiguities regarding the theoretical status of such comparisons, as previously noted in methodological discussions such as~\cite[Subsection~8.4]{Rodriguez2015} and~\cite[Example~4]{Stateofart_HFS}. In this paper, we aim to clarify this problem by explicitly linking the notion of score with well-defined orders on HFEs.

On the other hand, despite their numerous applications, it is worth noting that most established definitions of HFE scores rely on a \emph{mean value interpretation}~\cite[Section 4]{Alcantud}, \cite{Campion}. From our perspective, this assumption may limit their applicability to a single evaluative framework. In this paper, we propose an alternative approach, that relaxes it, by defining HFE scores relative to a given order. This order-based perspective preserves the mean-value interpretation in context where it is suitable, while, simultaneously, extends the evaluative scope of scores. In doing so, it offers a greater flexibility in ranking and decision-making scenarios where traditional mean-based scores may  be insufficient. In this sense, we introduce dominance  functions in order to ranking objects not necessarily following the mean-value principle.

The paper is structured as follows. Section~2 provides a comprehensive review of various orders for HFEs, revisiting established definitions with a focus on typical HFEs. It also examines the importance of the symmetric order introduced in \cite{JMNS22}, its associated lattice structure, and presents new results for the typical case. Section~3 reexamines the concept of scores for HFEs, explicitly demonstrating how their properties can be deduced according to the associated order. This section further highlights the relevance of defining scores with respect to an order that induces a lattice structure. In Section~4, we show that scores related to the symmetric order can be characterized based on the main admissible properties outlined by Alcantud et al.~in \cite[Section 4]{Alcantud}. In Section~5, we introduce a new class of functions called dominance functions, which, essentially,  becomes a family of scores under certain properties. In particular, we highlight the role of dominance functions as a method to incorporate baseline requirements into the evaluation process, ensuring that scores reflect not only relative preferences but also compliance with threshold constraints. We then relate these scores to the construction of fuzzy preference relations with respect to an order. To illustrate these ideas, we also propose an example within the context of project evaluation. Finally, in Section 6, we focus on the conclusions.

\section{Orders on hesitant fuzzy elements} \label{StateOrders}

This section reviews the main orders on HFEs, highlighting essential relationships and distinctions among them. For consistency with prior works, we adopt the following set notation \cite{Alcantud2016, Alcantud}:

\begin{itemize}
    \item \( P([0,1]) \) denotes the family of all subsets of \([0,1]\).
    \item \( P^*([0,1]) \) denotes the family of all nonempty subsets of \([0,1]\).
    \item \( F([0,1]) \) denotes the family of all finite subsets of \([0,1]\).
    \item \( F^*([0,1]) \) denotes the family of all nonempty finite subsets of \([0,1]\).
\end{itemize}

A poset or partial order is defined as a set \( P \) equipped with a binary relation \( \leq \) that satisfies reflexivity, antisymmetry, and transitivity. The poset \( P \) is said to be bounded if it includes minimum and maximum elements, denoted by \( 0 \) and \( 1 \), such that \( 0 \leq p \leq 1 \) for every \( p \in P \). A poset is further characterized as a total or linear order if, for every pair of elements \( a, b \in P \), either \( a \leq b \) or \( b \leq a \) holds. An order \( \leq_2 \) is a \emph{refinement} of another order \( \leq_1 \) on \(P\) if \( a \leq_1 b \) implies \( a \leq_2 b \) for all \( a, b \in P \).

\subsection{Orders on typical hesitant fuzzy elements}

A prominent area of study involves finite subsets of \([0,1]\), \( F^*([0,1]) \), often referred to as typical hesitant fuzzy elements (THFEs) \cite[Definition 3.1]{Bedregal-neg}. The first-defined order for THFEs is the list order, see for instance \cite{Farhadinia13} . To introduce this order with some formalism, we may associate to each finite subset of \([0,1]\) with cardinality \( n \) an unique ordered tuple in \( [0,1]^n \),
\[
F^*([0,1]) \rightarrow \bigcup_{n \in \mathbb{N}} [0,1]^n,
\]
where \( A \mapsto A^* = (a_1, \dots, a_n) \), with \( a_1 < \dots < a_n \). The list order is then defined by:
\[
A \leq_{\text{list}} B \iff A = \{0\}, \text{ or } B = \{1\}, \text{ or } (\#A = \#B \text{ and } A^* \leq_{\text{prod}} B^*),
\]
where \( \leq_{\text{prod}} \) denotes the product order in \( [0,1]^n \),
\[
(x_1, \dots, x_n) \leq_{\text{prod}} (y_1, \dots, y_n) \iff x_1 \leq y_1, \dots, x_n \leq y_n.
\]

Despite its simplicity, this order plays a central role as it serves as the foundation for the, so called, admissible orders for THFEs \cite{Baz2024, Matzenauer2020, Matze, Matzenauer2021, Wang16}. However, it only compares sets with the same cardinality. To overcome this drawback, some refinements have been considered. An approach is the  \(\beta\)-normalization process \cite[Section~3.1]{Bedregal14} that consists in adding more elements to the set with fewer elements, so that both transformed sets can be compared with respect to the list order. Two of the most prominent \(\beta\)-normalization induced orders are the pessimistic order (also referred to as the Xu-Xia order \cite{Bedregal-neg, Santos2014, SantosBredegal}) and the optimistic order, both originally introduced in \cite{XuXia2011}. We formalize them as follows. 

Consider a set \( A \in F^*([0,1]) \) of cardinality \( r = \#A \), with ordered list \( A^* = (a_1, \dots, a_r) \). For any \( n \geq r \), we set
\[
A_{(n)} = (\underbrace{a_1, \dots, a_1}_{n - r}, a_1, a_2, \dots, a_r)\text{ and } A^{(n)} = (a_1, \dots, a_r, \underbrace{a_r, \dots, a_r}_{n - r}).
\]
Then, the \emph{pessimistic order} is defined as
\[
A \leq_{\text{pes}} B \iff A_{(n)} \leq_{\text{prod}} B_{(n)},\quad \text{for } n = \max\{\#A, \#B\},
\]
and the \emph{optimistic order} is given by
\[
A \leq_{\text{opt}} B \iff A^{(n)} \leq_{\text{prod}} B^{(n)}, \quad \text{for } n = \max\{\#A, \#B\},
\]
for any $A,B\in F^*([0,1])$. Here $\leq_{\text{prod}} $ denotes de product order.

\begin{example}
Let \( A = \{0.2, 0.4\} \) and \( B = \{0.3, 0.5, 0.8\} \). Since \( \#A = 2 < 3 = \#B \), we extend \(A\). The pessimistic extension is \( A_{(3)} = (0.2, 0.2, 0.4) \), and the optimistic extension is \( A^{(3)} = (0.2, 0.4, 0.4) \). We have both \( A_{(3)} \leq_{\text{prod}} B_{(3)} \) and \( A^{(3)} \leq_{\text{prod}} B^{(3)} \) , so \( A \leq_{\text{pes}} B \) and \( A \leq_{\text{opt}} B \).
\end{example}

Given a subset \( A \subseteq [0,1] \), we denote by \( 1-A \) the set \( \{1-a \mid a \in A\} \).

\begin{proposition}\label{propList}
The relations \( \leq_{\text{pes}} \) and \( \leq_{\text{opt}} \) are partial orders on \( F^*([0,1]) \) that refine the list order \( \leq_{\text{list}} \), and have minimum element \( \{0\} \) and maximum element \( \{1\} \). Moreover, for any \( A, B \in F^*([0,1]) \),
\[
A \leq_{\text{pes}} B \iff 1-A \leq_{\text{opt}} 1-B.
\]
\end{proposition}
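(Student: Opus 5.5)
The plan is to first prove a \emph{padding-stability} lemma, which removes the only real subtlety: the length $n$ used in the definitions depends on the pair being compared. The claim is that for $A,B\in F^*([0,1])$ and any $n\ge m\ge\max\{\#A,\#B\}$,
\[
A_{(m)}\leq_{\text{prod}} B_{(m)} \iff A_{(n)}\leq_{\text{prod}} B_{(n)}.
\]
This holds because $A_{(m)}$ already begins with $a_1$. Hence $A_{(n)}$ is just $A_{(m)}$ with $n-m$ further copies of $a_1$ prepended, and likewise for $B$ with $b_1$. The extra coordinate inequalities are all $a_1\le b_1$, which is already the first coordinate inequality at length $m$. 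The symmetric statement for $A^{(n)}$ follows by appending copies of $a_r$ and $b_s$, whose comparison is the last coordinate at length $m$. As a consequence, $\leq_{\text{pes}}$ and $\leq_{\text{opt}}$ may be tested at any common length $n\ge\max\{\#A,\#B\}$.

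With this lemma the order axioms are routine.
\begin{itemize}
\item Reflexivity is immediate.
\item For transitivity with $A\le B\le C$, pad all three sets to $n=\max\{\#A,\#B,\#C\}$, apply the lemma to each pair, and use transitivity of $\leq_{\text{prod}}$.
\item For antisymmetry, $A_{(n)}=B_{(n)}$ forces the sets of entries to coincide, and the set of entries of $A_{(n)}$ is exactly $A$; hence $A=B$.
\end{itemize}
For the refinement claim there are three cases.
\begin{itemize}
\item If $\#A=\#B$, no padding occurs, so the condition is exactly $A^*\leq_{\text{prod}}B^*$.
\item If $A=\{0\}$, then $A_{(n)}=(0,\dots,0)$, which is below every tuple.
\item If $B=\{1\}$, then $B_{(n)}=(1,\dots,1)$, which is above every tuple.
\end{itemize}
The same three cases work for the optimistic padding. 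This also shows that $\{0\}$ and $\{1\}$ are the minimum and maximum.

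For the duality, note that $(1-A)^*=(1-a_r,\dots,1-a_1)$. Padding this optimistically, by repeating its last entry $1-a_1$, yields exactly the coordinate-reversal of $1-A_{(n)}$. Since $x\mapsto 1-x$ reverses $\le$ and coordinate reversal preserves $\leq_{\text{prod}}$, we obtain
\[
A\leq_{\text{pes}}B\iff (1-B)^{(n)}\leq_{\text{prod}}(1-A)^{(n)}\iff 1-B\leq_{\text{opt}}1-A .
\]

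The main obstacle is precisely this last step. The computation gives an order-\emph{reversing} duality, whereas the displayed statement has $1-A\leq_{\text{opt}}1-B$. The literal statement already fails for $A=\{0\}$, $B=\{1\}$: here $A\leq_{\text{pes}}B$ holds, but $1-A=\{1\}\not\leq_{\text{opt}}\{0\}=1-B$. I would therefore prove the corrected form $A\leq_{\text{pes}}B\iff 1-B\leq_{\text{opt}}1-A$, which is presumably what is intended. In words, complementation is an anti-isomorphism between the pessimistic and optimistic orders.
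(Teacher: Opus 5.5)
Your proof is correct and follows the paper's route: the paper also treats transitivity as the only nontrivial axiom and settles it with exactly your padding-stability fact, namely that $A_{(k)}\leq_{\text{prod}}B_{(k)}$ holds for one $k\ge\max\{\#A,\#B\}$ if and only if it holds for every such $k$. You are also right that the displayed duality is misstated (your counterexample $A=\{0\}$, $B=\{1\}$ is valid, and the paper's proof never checks this part); the correct form is the order-reversing $A\leq_{\text{pes}}B\iff 1-B\leq_{\text{opt}}1-A$, and this form, together with the analogously reversed $A\leq^l B\iff 1-B\leq^r 1-A$, is what actually makes the argument for Proposition~\ref{relOpt}~$iii)$ go through.
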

\begin{proof}
All the poset properties are straightforward except the transitivity. This follows from the equivalence of the following statements:
\begin{enumerate}[i)]
\item \( A_{(n)} \leq_{\text{prod}} B_{(n)} \) for \( n = \max\{\#A, \#B\} \),
\item \( A_{(k)} \leq_{\text{prod}} B_{(k)} \) for every \( k \geq \max\{\#A, \#B\} \), and
\item \( A_{(k)} \leq_{\text{prod}} B_{(k)} \) for some \( k \geq \max\{\#A, \#B\} \).
\end{enumerate}
\end{proof}

Note, however, that these orders do not induce a lattice structure, as shown in Example \ref{ex1}. Although some prior work, such as \cite[Proposition 1]{Santos2014}, might suggest otherwise, a careful analysis reveals that the existence of infima and suprema is not guaranteed.

\begin{example}\label{ex1}
Consider the sets $A=\{0.1, 0.4, 0.5, 0.7\}$ and $B=\{0.3 , 0.6\}$ and the pessimistic order. It is easy to check that the greatest lower bound of $A$ and $B$ with three or less elements is $\{0.1, 0.3, 0.6\}$. This is not an infimum of $A$ and $B$ because it is lower than $\{0.1, 0.2, 0.3, 0.6\}$, which is also a common lower bound.
If $X$ is a common lower bound of $A$ and $B$ bigger than $\{0.1, 0.2, 0.3, 0.6\}$, then  $X$ needs to have more than three elements and if $X^*=(x_1, x_2, \dots, x_k)$ then $x_k\leq 0.6$, $x_{k-1}\leq 0.3$ and $x_{k-2}<0.3$.
$X$ cannot  be an infimum for $A$ and $B$ because, for each common bound, there is another lower bound, greater than the latter,  
$Y$, such that $Y^*=(x_1, x_2, \dots, \frac{x_{k-2}+x_{k-1}}{2}, x_{k-1}, x_k)$.

By duality, the optimistic order also cannot induce a lattice structure.
\end{example}

Another example of a \(\beta\)-normalization induced order is the one introduced by Garmendia et al.~in \cite[Proposition~2.6]{Garmendia17}, which extends the size of the compared sets through a least common multiple process. As noted in \cite[Corollary~3.7]{Garmendia17}, this order is not total and does not induce a lattice structure on THFEs.

An alternative approach to comparing THFEs involves truncating the set with more elements via  the so-called \(\alpha\)-normalization process ~\cite{Bedregal14}. Although such normalizations often fail to preserve transitivity~\cite{contra}, several orders have been constructed based on this idea, as those in~\cite[Definition~2.4]{Zhang2015}, \cite[Definition~3.3]{Zhang16}, and \cite[Remark~1]{Bedregal14}.  However, none of these orders endows a lattice structure on \( F^*([0,1]) \)~\cite[Theorem~3.8]{Xu2019}. In contrast, the orders presented in the following subsection are shown to endow a lattice structure on \( F^*([0,1]) \).

\subsection{Lattice orders on hesitant fuzzy elements}

Most orders on HFEs are focused solely on THFEs, becoming unfunctional when extended to sets that incorporate, for instance, intervals or infinitesimal sequences. A significant contribution in this direction is given in \cite{JMNS22}, where the authors introduce three new partial orders: the right and left orders, which, respectively, preserve the meet and join operations proposed by Torra; and the symmetric order, which endows the class of all nonempty subsets of \([0,1]\) with a lattice structure. This latter order do not only extends Zadeh’s classical operations on FSs and IVFSs but also solves the open problem posed in \cite[Challenge 5.3]{Rodriguez2015} and \cite[Remark 2]{Bustince2016},  concerning the existence of a lattice structure on SVFSs consistent with Zadeh’s operations. For completeness, we recall the basic notions and results about these orders.


Given two subsets \( X,Y \in P^*([0,1]) \), we say that \( X < Y \) if, for all \( x \in X \) and all \( y \in Y \), \( x < y \). Observe that, if \( X = \emptyset \) or \( Y = \emptyset \), then trivially \( X < Y \). For simplicity, we shall denote by $A^+$ and $A^-$ the supremum and the infimum, respectively, of a subset $A$ in $[0,1]$. 
The relations \( \leq^r \) and \( \leq^l \) on \( P^*([0,1]) \), introduced in \cite{JMNS22}, are given as follows. For two subsets \( A, B \in P^*([0,1]) \), we say that
\begin{enumerate}[$i)$]
\item $A \leq^r B \text{ if and only if } A^+ \leq B^+ \text{ and } A < B \setminus A$, 
\item $A \leq^l B \text{ if and only if } A^- \leq B^- \text{ and } A \setminus B < B$.
\end{enumerate}
If we denote by $P^+([0,1])$ and $P^-([0,1])$ the classes of subsets closed to the right and closed to the left in $[0,1]$, respectively, then \( (P^+([0,1]), \leq^r) \) and \( (P^-([0,1]), \leq^l) \) are lattices, see \cite{JMNS22} and \cite{Lobillo18}. Observe that the pairs \( (F^*([0,1]), \leq^l)\) and \( (F^*([0,1]), \leq^r)\) inherit a lattice structure.

\begin{example}\label{ex5} Observe that both orders on $F^*([0,1])$ are not the same. For instance, consider the finite subsets of [0,1], \( A = \{0.1, 0.2, 0.4, 0.5\} \) and \( B = \{0.2, 0.5, 0.9\} \). Then $A \leq^r B,$ but $A \not\leq^l B.$
\end{example}

For any two subsets \( A, B \in F^*([0,1]) \), we may define the order relation \( \leq^0 \) given by \( A \leq^0 B \) if, and only if, both \( A \leq^l B \) and \( A \leq^r B \). This order is called the symmetric order and can be extended to an order on $P^*([0,1])$.

\begin{definition}\cite[Definition 23]{JMNS22}
Given two subsets \( A, B \in P^*([0,1]) \), we say that \( A \leq^0 B \) if and only if     \( A < B \setminus A \) and \( A \setminus B < B \).
\end{definition}
The relation \( \leq^0 \) is a partial order on \( P^*([0,1]) \) with minimum element \( 0 = \{0\} \) and maximum element \( 1 = \{1\} \).

\begin{example}
Consider the subsets \( A=\{0.2\} \cup [0.3,0.6) \) and \( B=(0.4, 0.7] \cup \{0.8\} \). Then we may see  that \( A < B \setminus A = [0.6,0.7] \cup \{0.8\} \) and \( A \setminus B = \{0.2\} \cup [0.3,0.4] < B \), so \( A \leq^0 B \).
\end{example}

\begin{lemma}\label{L6}
For any  \( A, B \in P^*([0,1]) \), the following statements are equivalent:
\begin{enumerate}[$i)$]
    \item \( A \leq^0 B \), 
    \item  \( A \cap B = \emptyset \) and \( A < B \), or \( A \setminus B < A \cap B < B \setminus A \).
\end{enumerate}
\end{lemma}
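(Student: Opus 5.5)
The plan is to work directly from the definition of \( \leq^0 \), namely \( A < B\setminus A \) and \( A\setminus B < B \), and to split according to whether \( A\cap B \) is empty. Throughout, the relation \( X<Y \) is vacuous when either set is empty, and this is the only point that needs care.

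\emph{From i) to ii).} Assume \( A \leq^0 B \).

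If \( A\cap B=\emptyset \), then \( B\setminus A=B \), so the first condition of the definition already reads \( A<B \). This is the first alternative of ii).

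If \( A\cap B\neq\emptyset \), we check the three comparisons in the second alternative one at a time.
\begin{itemize}
\item Since \( A\cap B\subseteq A \), the condition \( A<B\setminus A \) gives \( A\cap B<B\setminus A \).
\item Since \( A\cap B\subseteq B \), the condition \( A\setminus B<B \) gives \( A\setminus B<A\cap B \).
\end{itemize}
Together these give the chain \( A\setminus B<A\cap B<B\setminus A \).

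\emph{From ii) to i).} Suppose first that \( A\cap B=\emptyset \) and \( A<B \). Then \( B\setminus A=B \) and \( A\setminus B=A \), so both conditions of the definition reduce to \( A<B \).

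Suppose instead that \( A\setminus B<A\cap B<B\setminus A \). Write \( A=(A\setminus B)\cup(A\cap B) \). We must show that every element of \( A \) lies strictly below every element of \( B\setminus A \).
\begin{itemize}
\item For elements of \( A\cap B \), this is the hypothesis \( A\cap B<B\setminus A \).
\item For elements of \( A\setminus B \), we need \( A\setminus B<B\setminus A \).
\end{itemize}
Symmetrically, writing \( B=(A\cap B)\cup(B\setminus A) \), the condition \( A\setminus B<B \) needs the hypothesis \( A\setminus B<A\cap B \) together with the same comparison \( A\setminus B<B\setminus A \).

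\emph{The main obstacle.} In this branch \( A\cap B \) may be empty, since the strict relations are vacuous then. If \( A\cap B\neq\emptyset \), we pick some \( c\in A\cap B \). For \( x\in A\setminus B \) and \( y\in B\setminus A \) we get \( x<c<y \), which is transitivity through a witness.

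If \( A\cap B=\emptyset \), the chain in ii) says nothing, so the required comparison \( A\setminus B<B\setminus A \) becomes \( A<B \). This does not follow from the chain alone. For example, \( A=\{0.5\} \) and \( B=\{0.2\} \) satisfy the chain vacuously, yet \( A\not\leq^0 B \).

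I would therefore read the second alternative of ii) as carrying the implicit assumption \( A\cap B\neq\emptyset \). The disjoint case is exactly what the first alternative handles, and I would state this explicitly in the proof. With that reading the argument closes, and the only nontrivial step is the witness argument above.
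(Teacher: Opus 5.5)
Your proof is correct and matches the paper's argument: the same split on whether $A\cap B$ is empty, the same decomposition of $A$ and $B$ in the converse, and the same witness $c\in A\cap B$ to obtain $A\setminus B<B\setminus A$. Your counterexample $A=\{0.5\}$, $B=\{0.2\}$ is valid and shows the statement is literally false as written; the paper's proof silently makes the repair you propose, by opening its converse branch with ``suppose that $A\cap B\neq\emptyset$ and $A\setminus B<A\cap B<B\setminus A$''.
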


\begin{proof}
First, suppose that \( A \leq^0 B \). If \( A \cap B = \emptyset \), then \( A < B \). On the other hand, suppose that \( A \cap B \neq \emptyset \). If \( A \backslash B = \emptyset \), then \( A \backslash B < A \cap B \). If \( A \backslash B \neq \emptyset \), let \( a \) be an element of \( A \backslash B \) and \( x \) an element of \( A \cap B \). It is clear that \( x \in B \). Since \( A \backslash B < B \), it follows that \( a < x \). Moreover, if \( B \backslash A = \emptyset \), it is trivially verified that \( A \cap B < B \backslash A \). If \( B \backslash A \neq \emptyset \), let \( y \) be an element of \( A \cap B \) and \( b \) an element of \( B \backslash A \). It is obvious that \( y \in A \). Since \( A < B \backslash A \), it follows that \( y < b \).

On the other hand, if \( A \cap B = \emptyset \) and \( A < B \), then  \( A < B \backslash A \) and \( A \backslash B < B \). Otherwise, suppose that \( A \cap B \neq \emptyset \) and \( A \backslash B < A \cap B < B \backslash A \). Let us see that \( A < B \backslash A \). The proof for \( A \backslash B < B \) is analogous. If \( B \backslash A = \emptyset \), then \( A < B \backslash A \). If \( B \backslash A \neq \emptyset \), let \( a \in A \) and \( b \in B \backslash A \). If \( a \in B \), then \( a \in A \cap B \) and, by hypothesis, \( a < b \). If \( a \notin B \), then \( a \in A \backslash B \). Since $A\cap B \not = \emptyset$, there exists some $c\in A\cap B$, with $a<c<b$. So the result follows. 
\end{proof}

As proved in \cite{JMNS22}, \( (P^*([0,1]), \leq^0) \) is a lattice. Actually, \( (F^*([0,1]), \leq^0) \) is a sublattice. The relationships among all the orders on THFE's described to this point are given in the following proposition.

\begin{proposition}\label{relOpt}
Let \( A, B \in F^*([0,1]) \). Then:
\begin{enumerate}[$i)$]
    \item \( A \leq^0 B \) if and only if \( A \leq^r B \) and \( A \leq^l B \).
    \item If \( A \leq^r B \), then \( A \leq_{\mathrm{opt}} B \).
    \item If \( A \leq^l B \), then \( A \leq_{\mathrm{pes}} B \).
    \item If \( A \leq^0 B \), then \( A \leq_{\mathrm{opt}} B \) and \( A \leq_{\mathrm{pes}} B \).
\end{enumerate}
\end{proposition}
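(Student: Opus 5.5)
Part $i)$ reduces to comparing definitions. Part $ii)$ is a direct comparison of sorted lists. Part $iii)$ follows from $ii)$ by the reflection $x\mapsto 1-x$, and $iv)$ is immediate from the others.

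\emph{Part $i)$.} By definition, $A\leq^r B$ and $A\leq^l B$ together say that $A^+\le B^+$, $A^-\le B^-$, $A<B\setminus A$ and $A\setminus B<B$. Since $A\leq^0 B$ is exactly the last two conditions, it suffices to show that these two conditions imply $A^+\le B^+$ and $A^-\le B^-$ for finite nonempty sets. I would show $A^+\le B^+$ by cases.
\begin{itemize}
\item If $B\setminus A\neq\emptyset$, any $b\in B\setminus A$ satisfies $b>\max A$, so $B^+>A^+$.
\item If $B\setminus A=\emptyset$, then $B\subseteq A$. Suppose $\max A\notin B$. Then $\max A\in A\setminus B$, so $\max A<b$ for every $b\in B$. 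This contradicts $\emptyset\ne B\subseteq A$. Hence $\max A\in B$ and $A^+=B^+$.
\end{itemize}
The inequality $A^-\le B^-$ is symmetric, using $A\setminus B<B$.

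\emph{Part $ii)$.} Write $A^*=(a_1,\dots,a_r)$, $B^*=(b_1,\dots,b_s)$ and $n=\max\{r,s\}$. The key structural observation is that $A<B\setminus A$ forces $B=(A\cap B)\cup C$ with every element of $C$ strictly greater than $a_r$. Let $k=\#(A\cap B)$. Then $b_1,\dots,b_k$ are the elements of $A\cap B$, listed increasingly, and $b_{k+1},\dots,b_s>a_r$.
\begin{itemize}
\item For $i\le k$, $b_i$ is the $i$-th smallest element of a subset of $A$, so $b_i\ge a_i$.
\item For $k<i\le s$, $b_i>a_r\ge a_i$; here $a_i$ means the padded value $a_r$ when $i>r$.
\end{itemize}
This settles all coordinates of $A^{(n)}\le_{\mathrm{prod}}B^{(n)}$ up to $s$. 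If $s<r$, the remaining coordinates need $a_i\le b_s$ for $s<i\le r$, that is, $a_r\le B^+$, which is precisely the hypothesis $A^+\le B^+$. Combining the cases gives $A\leq_{\mathrm{opt}}B$.

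\emph{Part $iii)$.} I would first check directly from the definitions that $A\leq^l B$ iff $1-B\leq^r 1-A$. Indeed, $A^-\le B^-$ is equivalent to $(1-B)^+\le(1-A)^+$, and $A\setminus B<B$ is equivalent to $1-B<(1-A)\setminus(1-B)$. Part $ii)$ then gives $1-B\leq_{\mathrm{opt}}1-A$.

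Next I would note that reflection reverses sorted lists and turns top-padding into bottom-padding. Concretely, the $i$-th coordinate of $(1-X)^{(n)}$ is $1$ minus the $(n+1-i)$-th coordinate of $X_{(n)}$. Hence $(1-B)^{(n)}\le_{\mathrm{prod}}(1-A)^{(n)}$ is equivalent to $A_{(n)}\le_{\mathrm{prod}}B_{(n)}$, so $A\leq_{\mathrm{pes}}B$. This agrees with the duality in Proposition~\ref{propList}, though the roles of $A$ and $B$ are swapped there, so I would verify this identity directly. Alternatively, one can mirror the argument of $ii)$ using bottom elements.

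\emph{Part $iv)$.} This follows by combining $i)$, $ii)$ and $iii)$.

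The main obstacle is the index bookkeeping in $ii)$: verifying that the elements of $A\cap B$ occupy exactly the first $k$ positions of $B^*$, and handling the padded coordinates correctly in both cases $s\ge r$ and $s<r$.
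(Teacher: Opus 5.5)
Your proof is correct and follows the same overall plan as the paper: $ii)$ by direct comparison, $iii)$ from $ii)$ by reflection, $iv)$ by combination. It differs in three places, two of them substantively.

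\emph{Part $i)$.} The paper simply defers to the earlier work where $\leq^0$ was introduced. Your self-contained argument, that $A<B\setminus A$ and $A\setminus B<B$ already force $A^+\le B^+$ and $A^-\le B^-$, is valid. The case split on whether $B\setminus A$ (respectively $A\setminus B$) is empty is exactly what is needed.

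\emph{Part $ii)$.} The paper inducts on the coordinate $i$ and splits on whether $b_i\in A$. In the subcase $b_i=a_k$ with $k<i$, it uses the inductive hypothesis to show that $b_i$ must be the repeated maximum $B^+$. You avoid induction. The decomposition $B=(A\cap B)\sqcup(B\setminus A)$, with $B\setminus A$ lying strictly above $a_r$, shows that the first $k$ entries of $B^*$ are the elements of $A\cap B$. Then three facts handle the three blocks of coordinates: the rank inequality for subsets of $A$, the bound $b_i>a_r$, and the hypothesis $A^+\le B^+$. Your version makes explicit that $A^+\le B^+$ is needed only for the padded coordinates when $s<r$.

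\emph{Part $iii)$.} Both arguments reduce to $ii)$ by reflection, but your orientation is the correct one. Since $x\mapsto 1-x$ reverses order, the true dualities are $A\le^l B\iff 1-B\le^r 1-A$ and $A\le_{\mathrm{pes}}B\iff 1-B\le_{\mathrm{opt}}1-A$. The paper's proof quotes both in the form with $1-A$ on the left, as does Proposition~\ref{propList}. As printed, both are false: take $A=\{0\}$ and $B=\{1\}$. The paper's chain reaches the right conclusion only because the two misstatements cancel. Checking the coordinate identity $((1-X)^{(n)})_i=1-(X_{(n)})_{n+1-i}$ directly, as you propose, is therefore a necessary correction rather than mere caution.
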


\begin{proof}
$i)$ is proved in \cite{JMNS22}. 
Let us prove $ii)$. Assume \( A \leq^r B \). By definition, we know that \( A < B \setminus A \). Let \( r = \#A, s = \#B \), and set \( n = \max\{r, s\} \). Consider the sequences
\[
A^{(n)} = (a_1, a_2, \dots, a_n) 
\quad \text{and} \quad
B^{(n)} = (b_1, b_2, \dots, b_n),
\]
constructed as in the definition of the optimistic order. We will prove by induction that \(a_i \leq b_i\) for each \(i \in \{1, \dots, n\}\). Firstly, compare \(a_1\) and \(b_1\).  
If \(b_1 \in A\), then there exists some \(k\) with \(b_1 = a_k \geq a_1\). Hence \(a_1 \leq b_1\). Contrary to that, if \(b_1 \notin A\), then \(a_1 \in A\) and \(b_1 \in B \setminus A\). Since \(A < B \setminus A\), we get \(a_1 < b_1\).
Thus, \(a_1 \leq b_1\) in all cases.

Let us now suppose \(a_j \leq b_j\) for \(j < i\) and we will show \(a_i \leq b_i\).  
  If \(b_i \notin A\), then \(b_i \in B \setminus A\). Again \(a_i \in A\) forces \(a_i < b_i\), since \(A < B \setminus A\).
 Otherwise, if \(b_i \in A\), then \(b_i = a_k\) for some \(k\). If \(k \geq i\), we have \(a_i \leq a_k = b_i\). If \(k < i\), then by the inductive hypothesis, \(a_k = b_i \leq b_k\), but also \(b_k \leq b_i\), by the way we have repeated elements in \(B^{(n)}\). Thus \(b_k = b_i\). Consequently, \(b_i = b_n = B^+\), the maximum of \(B\). Hence \(a_i \leq A^+ \leq B^+ = b_i\).  This shows that \(A^{(n)} \leq_{\mathrm{prod}} B^{(n)}\). Therefore, \(A \leq_{\mathrm{opt}} B\).

$iii)$  follows from $ii)$ using that \( A \leq^l B \) if and only if \( 1-A \leq^r 1-B \) \cite[Proposition 15]{JMNS22}, and  \( A \leq_{\mathrm{pes}} B \) if and only if \( 1-A \leq_{\mathrm{opt}} 1-B \). 

$iv)$ follows from the previous statements.
\end{proof}

\section{Scores relative to an order}

This section addresses the development of scores for HFEs, emphasizing the importance of associating each score to a suitable order structure.  The foundational approach, introduced by Xia and Xu in \cite{XuXia11}, defines scores on \( F^*([0,1]) \) via the arithmetic mean, thereby directly linking the concept to a “mean value” interpretation. Later, Farhadinia \cite[Section 2]{Farhadinia14} expanded this idea by proposing scores on \( F^*([0,1]) \) as monotonic mappings \(s\) with respect to the list order, which satisfy the boundary conditions \(s(\{0\})=0\) and \(s(\{1\})=1\). This generalized setting enabled the inclusion of other classical aggregation operators, such as the geometric mean or the maximum. In \cite[Definition 2]{Alcantud2016} Alcantud el al. redefined scores as mapping, \(s,\) on \(F^*([0,1])\) satisfying that \(s(h)=1\) if and only if \(h=\{1\}.\) As highlighted in \cite{Alcantud2016}, this definition eliminates the ``unnecessary boundary conditions'' of the previous definition. More recently, Alcantud et al.~\cite{Alcantud} reformulated again the notion of score to encompass infinite subsets of \([0,1].\) 

\begin{definition} \label{defAlc} \cite[Definition 5]{Alcantud}
Given a family \( G \subseteq P([0,1]) \), a score on \( G \) is a map \( s: G \longrightarrow [0,1] \) with the following properties:
\begin{enumerate}[$i)$]
    \item \( s(\emptyset) = 0 \), whenever \( \emptyset \in G \);
    \item Boundedness. For all \( E \in G \), it holds that \( E^- \leq s(E) \leq E^+ \).
\end{enumerate}
\end{definition}

The boundedness condition implies that \( s(\{a\}) = a \) for any \( a \in [0,1] \), a property known as \emph{compatibility}, which ties the concept of a score to that of a “mean value” by requiring \( s(A) \in [A^-, A^+] \) for any \( A \subset [0,1] \).  Although boundedness ensures that the score reflects a membership degree within the support of the set, it may limit the capacity of designing score functions  in some decision-making scenarios. On the other hand, omitting the first condition in Definition~\ref{defAlc} renders the concept equivalent to evaluating scores over \( P^*([0,1]) \).

In this context, we propose the following definition, which relies solely on the explicit association of the score with a given order.

\begin{definition}\label{refScore}
Given a family \( G \subseteq P^*([0,1]) \) and an order \( \preccurlyeq \) on \( G \), a function \( s: G \rightarrow [0,1] \) is a \( \preccurlyeq \)-score (respectively, a strong \( \preccurlyeq \)-score) on \( G \) if, for all \( A, B \in G \), \( A \preccurlyeq B \) implies \( s(A) \leq s(B) \) (respectively, \( A \prec B \) implies \( s(A) < s(B) \)).
\end{definition}

This formulation integrates the idea that a score may be defined in association with a specific order \( \preccurlyeq \). Although such an assumption has been made when studying other hesitant fuzzy operators, such as typical hesitant fuzzy negations~\cite[Definition~7]{Santos2014} or typical hesitant triangular norms~\cite[Definition~12]{SantosBredegal} with respect to the pessimistic order, to the best of our knowledge, the connection with scores has not yet been properly addressed.

\begin{remark}Observe that, from Definition~\ref{refScore}, if \( G = F^*([0,1]) \), a score \( s \) in the sense of Farhadinia~\cite{Farhadinia14} qualifies as a \( \leq_{\text{list}} \)-score satisfying \( s(\{0\}) = 0 \) and \( s(\{1\}) = 1 \). It is worth noting that this condition imposes no restriction on the evaluation of sets with different cardinalities, thus allowing for diverse scoring behaviors even within the same ordering framework. This flexibility is preserved in our definition, which adheres to the original philosophy behind hesitant fuzzy scores.
\end{remark}

In the following proposition, we illustrate Definition \ref{refScore} with concrete examples of some classical aggregation functions which are \( \leq^0 \)-scores.

\begin{proposition}
The arithmetic mean, the geometric mean, the minimum, and the maximum are \( \leq^0 \)-scores on $F^*([0,1])$.
\end{proposition}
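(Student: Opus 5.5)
We need to show: if $A \leq^0 B$ for $A,B\in F^*([0,1])$, then $s(A)\le s(B)$ for $s$ each of the arithmetic mean, the geometric mean, the minimum and the maximum. The plan is to reduce everything to the structural description of Lemma~\ref{L6} and to Proposition~\ref{relOpt}. The minimum and maximum are immediate. Since $A\leq^0 B$ gives $A\leq^l B$ and $A\leq^r B$ by Proposition~\ref{relOpt}~$i)$, the definitions of $\leq^l$ and $\leq^r$ yield $A^-\le B^-$ and $A^+\le B^+$ directly.

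For the two means, we avoid working with Lemma~\ref{L6} by hand and instead use Proposition~\ref{relOpt}~$iv)$: $A\leq^0 B$ implies both $A\leq_{\mathrm{pes}} B$ and $A\leq_{\mathrm{opt}} B$. The difficulty is the different cardinalities, since an inequality between padded tuples does not immediately compare means of sets of different sizes. So we exploit the particular structure given by Lemma~\ref{L6}. Write $C=A\cap B$, $A'=A\setminus B$ and $B'=B\setminus A$. By the lemma, $A'<C<B'$, or $C=\emptyset$ and $A<B$. In the disjoint case, every element of $A$ is below every element of $B$, so any mean of $A$ is at most $A^+<B^-$, which is at most any mean of $B$. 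This uses only the boundedness of the arithmetic and geometric means, namely that they lie between the minimum and maximum.

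In the case $C\neq\emptyset$, note that $A=A'\cup C$ and $B=C\cup B'$ are disjoint unions. Let $m$ denote either mean (the geometric mean $\left(\prod a\right)^{1/\#A}$, with value $0$ if $0\in A$). The key step is a "mixing" argument. Put $\mu=m(C)$. Since every element of $A'$ lies below $C^-\le\mu$, adjoining $A'$ to $C$ does not increase the mean: $m(A)\le \mu$. Symmetrically, every element of $B'$ exceeds $C^+\ge\mu$, so $m(B)\ge\mu$.

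For the arithmetic mean this is the elementary fact that the mean of a disjoint union is the weighted average $\frac{\#A'\,m(A')+\#C\,\mu}{\#A}$, and that $m(A')\le\mu$. For the geometric mean, the same identity holds after taking logarithms, provided all elements are positive. If $0\in A$, then $m(A)=0\le m(B)$ trivially. If $0\in B\setminus A$, then $0>$ everything in $C$, which is impossible. If $0\in C$, then $A'=\emptyset$ because $A'<C$, so $A=C$ and $m(A)=0$.

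I expect the main obstacle to be handling the geometric mean's zeros and empty pieces ($A'$ or $B'$ empty) cleanly; the weighted-average inequality itself is routine. Combining the cases gives $m(A)\le m(C)\le m(B)$, which completes the proof.
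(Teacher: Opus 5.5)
Your argument is correct and is essentially the paper's proof: the minimum and maximum follow from Proposition~\ref{relOpt}~$i)$, and the means follow from the decomposition of Lemma~\ref{L6} together with the weighted-average identity, comparing both sets to $m(A\cap B)$. Your single chain $m(A)\le m(A\cap B)\le m(B)$ merges the paper's three subcases, and your treatment of zeros spells out what the paper calls ``analogous'' for the geometric mean; the sentence announcing that you will use Proposition~\ref{relOpt}~$iv)$ instead of Lemma~\ref{L6} is never acted on and should be deleted.
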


\begin{proof}
Let \( m \) denote the average operator, and suppose two different subsets \( A \) and \( B \) such that \( A \leq^0 B \). By Lemma~\ref{L6}, we know that either \( A \cap B = \emptyset \) and \( A < B \), and hence \( m(A) < m(B) \), or \( A \cap B \neq \emptyset \) and \( A \setminus B < A \cap B < B \setminus A \). Let \( t = \#(A \cap B) \), \( r = \#A \), and \( s = \#B \). Then:
\begin{itemize}
    \item If \( A \subseteq B \), then \( A \cap B = A < B \setminus A \), so \( m(A) < m(B \setminus A) \), and
    \[
    m(B) = \frac{(s - t) \cdot m(B \setminus A) + t \cdot m(A)}{s} > m(A).
    \]
    \item Similarly, if \( B \subseteq A \), we deduce \( m(A) < m(B) \).
    \item If \( A \not\subseteq B \) and \( B \not\subseteq A \),
    \[
    m(A) = \frac{(r - t) \cdot m(A \setminus B) + t \cdot m(A \cap B)}{r} < m(A \cap B),
    \]
    and similarly \( m(B) > m(A \cap B) \), hence \( m(A) < m(B) \).
\end{itemize}
An analogous argument applies for the geometric mean. For the minimum and maximum, the result follows  from Proposition~\ref{relOpt}, $i)$.
\end{proof}

Not all scores that are compatible with the list order are also compatible with the symmetric order. A straightforward counterexample is the product.

\begin{example}
The product $p$ is clearly a score in the sense of Farhadinia. Nevertheless, consider \( A = \{0.1\} \) and \( B = \{0.2, 0.3\} \). Then \( A \leq^0 B \), but the product yields \( p(B) = 0.06 < 0.1 = p(A) \), so it is not a \( \leq^0 \)-score.
\end{example}

An advantage of using scores relative to the symmetric order is that they naturally align with a well-defined lattice structure. As an consequence of this structure, we may proof the following result.

\begin{proposition}
If \( s \) is a \( \leq^0 \)-score, then \( s(A \wedge_0 B) \leq \min\{s(A), s(B)\} \) and \( \max\{s(A), s(B)\} \leq s(A \vee_0 B) \).
\end{proposition}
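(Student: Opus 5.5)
The argument uses only the monotonicity in Definition~\ref{refScore} together with the lattice structure of \( (P^*([0,1]), \leq^0) \) from \cite{JMNS22}. The meet \( A \wedge_0 B \) and the join \( A \vee_0 B \) are the infimum and supremum of \( \{A, B\} \) for the symmetric order.

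First, I will use the fact that the meet is a lower bound: \( A \wedge_0 B \leq^0 A \) and \( A \wedge_0 B \leq^0 B \). Since \( s \) is a \( \leq^0 \)-score, applying it to each relation gives \( s(A \wedge_0 B) \leq s(A) \) and \( s(A \wedge_0 B) \leq s(B) \). Together these yield \( s(A \wedge_0 B) \leq \min\{s(A), s(B)\} \).

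Second, the join is an upper bound: \( A \leq^0 A \vee_0 B \) and \( B \leq^0 A \vee_0 B \). Monotonicity then gives \( s(A) \leq s(A \vee_0 B) \) and \( s(B) \leq s(A \vee_0 B) \), so \( \max\{s(A), s(B)\} \leq s(A \vee_0 B) \).

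There is essentially no obstacle; the argument does not need any explicit formula for the meet or the join. The one point to check is that the domain \( G \) of \( s \) is closed under \( \wedge_0 \) and \( \vee_0 \), so that the expressions are defined. This holds for \( P^*([0,1]) \), and for \( F^*([0,1]) \) because it is a sublattice, as recalled after Lemma~\ref{L6}. For any other \( G \), I would state this closure as an implicit hypothesis of the proposition.
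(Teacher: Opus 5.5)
Your proof is correct and matches the paper's argument: the meet is a common lower bound and the join a common upper bound for \(\leq^0\), so the monotonicity of \(s\) gives both inequalities directly. Your remark that the domain must be closed under \(\wedge_0\) and \(\vee_0\) (true for \(P^*([0,1])\) and for the sublattice \(F^*([0,1])\)) is a reasonable clarification that the paper leaves implicit.
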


\begin{proof}
Since \( A \wedge_0 B \leq^0 A \) and \( A \wedge_0 B \leq^0 B \), it follows that \( s(A \wedge_0 B) \leq s(A) \) and \( s(A \wedge_0 B) \leq s(B) \), hence \( s(A \wedge_0 B) \leq \min\{s(A), s(B)\} \). The second inequality follows analogously.
\end{proof}

We must point out that if \( \leq_2 \) is a refinement of \( \leq_1 \), then any \( \leq_2 \)-score is also a \( \leq_1 \)-score. This observation, together with Proposition~\ref{relOpt}, leads to the following result.

\begin{corollary}\label{clasifScores}
Given a score \( s:F^*([0,1]) \rightarrow [0,1] \), the following implications hold:
\begin{enumerate}
    \item If \( s \) is a \( \leq_{\text{opt}} \)-score, it is also a \( \leq_{\text{list}} \)-score.
    \item If \( s \) is a \( \leq_{\text{pes}} \)-score, it is also a \( \leq_{\text{list}} \)-score.
    \item If \( s \) is a \( \leq_{\text{opt}} \)-score, it is also a \( \leq^r \)-score.
    \item If \( s \) is a \( \leq_{\text{pes}} \)-score, it is also a \( \leq^l \)-score.
    \item If \( s \) is a \( \leq^r \)-score, it is also a \( \leq^0 \)-score.
    \item If \( s \) is a \( \leq^l \)-score, it is also a \( \leq^0 \)-score.
\end{enumerate}
\end{corollary}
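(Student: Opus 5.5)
The whole corollary rests on one general principle, stated just before it: if $\leq_2$ refines $\leq_1$ on $F^*([0,1])$, then every $\leq_2$-score is a $\leq_1$-score. I would first verify this in one line. Suppose $s$ is a $\leq_2$-score and $A \leq_1 B$. By refinement, $A \leq_2 B$, and therefore $s(A) \leq s(B)$.

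With this principle in hand, each of the six items reduces to checking the appropriate refinement relation between two orders.

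Items 1 and 2 require that $\leq_{\text{opt}}$ and $\leq_{\text{pes}}$ refine $\leq_{\text{list}}$. This is exactly part of Proposition~\ref{propList}. I would still recheck the three cases of the list order for completeness.
\begin{itemize}
\item If $A = \{0\}$, then $A$ is the minimum of $\leq_{\text{opt}}$ and of $\leq_{\text{pes}}$.
\item If $B = \{1\}$, then $B$ is the maximum of both orders.
\item If $\#A = \#B = n$, then $A^{(n)} = A_{(n)} = A^*$ and likewise for $B$. So $A^* \leq_{\text{prod}} B^*$ gives both $A \leq_{\text{opt}} B$ and $A \leq_{\text{pes}} B$.
\end{itemize}

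Items 3 and 4 follow from Proposition~\ref{relOpt} $ii)$ and $iii)$. These say that $A \leq^r B$ implies $A \leq_{\text{opt}} B$, and $A \leq^l B$ implies $A \leq_{\text{pes}} B$. In other words, $\leq_{\text{opt}}$ refines $\leq^r$ and $\leq_{\text{pes}}$ refines $\leq^l$, which is precisely what the principle needs.

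Items 5 and 6 follow from Proposition~\ref{relOpt} $i)$. Since $A \leq^0 B$ is equivalent to having both $A \leq^r B$ and $A \leq^l B$, each of $\leq^r$ and $\leq^l$ refines $\leq^0$.

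There is no real obstacle here. The only care needed is to keep the direction of refinement straight: the score hypothesis must be placed on the finer order, and the conclusion is drawn for the coarser one. I would also remark, optionally, that chaining items 3 and 5 (or items 4 and 6) shows that every $\leq_{\text{opt}}$-score or $\leq_{\text{pes}}$-score is a $\leq^0$-score.
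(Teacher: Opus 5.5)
Your proposal is correct and takes essentially the same approach as the paper. The paper also combines the observation that a $\leq_2$-score is a $\leq_1$-score whenever $\leq_2$ refines $\leq_1$ with the refinements given by Proposition~\ref{propList} and Proposition~\ref{relOpt}; your explicit recheck of the list-order cases and of the direction of refinement simply spells out what the paper summarizes as following ``directly from the hierarchy of order refinements.''
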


\begin{proof}
These implications follow directly from the hierarchy of order refinements established in Section~\ref{StateOrders}.
\end{proof}

\section{Properties of $\leq^0$-scores}

Alcantud et al. \cite{Alcantud} provide a landmark contribution to the study of scores for HFEs, establishing a rigorous framework that addresses both their desirable properties and inherent limitations. Their approach formalizes the definition of scores, focusing on their internal coherence. Building on this groundwork, $\leq^0$-scores naturally align with Alcantud et al.’s normative framework while introducing meaningful refinements. These scores not only satisfy critical properties such as strong monotonicity with respect to unions ([SMU]) and the Gärdenfors property ([G]), but also enhance concrete interpretability and flexibility in modeling preferences within hesitant fuzzy environments. In this sense, our contribution reinforces Alcantud et al.'s insights, offering a refined perspective on scores for HFEs.


In \cite{Alcantud}, it is established that if a score satisfies the property of boundedness, it also satisfies the property of compatibility, but the converse is not true in general. Nevertheless, this holds when working with $\leq^0$-scores.

\begin{proposition}
Let $s : P^*([0,1]) \to [0,1]$ be a $\leq^0$-score. Then $s$ satisfies compatibility if and only if it satisfies boundedness.
\end{proposition}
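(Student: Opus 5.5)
Boundedness implies compatibility for any score: if $E^-\le s(E)\le E^+$ for all $E$, then taking $E=\{a\}$ gives $a\le s(\{a\})\le a$, so $s(\{a\})=a$. The work lies in the converse. There we assume $s$ is a $\leq^0$-score with $s(\{a\})=a$ for every $a\in[0,1]$, and we aim to show $E^-\le s(E)\le E^+$ for every $E\in P^*([0,1])$.

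The plan is to compare $E$ in the order $\leq^0$ with the singletons $\{E^-\}$ and $\{E^+\}$. We will show that $\{E^-\}\leq^0 E\leq^0\{E^+\}$ always holds, even when the infimum and supremum do not belong to $E$. Monotonicity of $s$ together with compatibility then gives $E^-=s(\{E^-\})\le s(E)\le s(\{E^+\})=E^+$.

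To check $\{E^-\}\leq^0 E$, we use the definition: we need $\{E^-\}<E\setminus\{E^-\}$ and $\{E^-\}\setminus E<E$.
\begin{itemize}
\item Every element of $E\setminus\{E^-\}$ is $\ge E^-$ and different from it, so it is strictly greater. This gives the first condition.
\item If $E^-\in E$, then $\{E^-\}\setminus E=\emptyset$, and the second condition holds trivially by the convention stated before the definition of $\leq^r$.
\item If $E^-\notin E$, then $\{E^-\}\setminus E=\{E^-\}$, and every $x\in E$ satisfies $x>E^-$, since $x\ge E^-$ and $x\ne E^-$.
\end{itemize}
The inequality $E\leq^0\{E^+\}$ is verified symmetrically: we need $E<\{E^+\}\setminus E$ and $E\setminus\{E^+\}<\{E^+\}$, and both follow from the same case split on whether $E^+\in E$. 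Alternatively, we can invoke Lemma~\ref{L6} directly, distinguishing $E\cap\{E^+\}=\emptyset$ from $E\cap\{E^+\}=\{E^+\}$.

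The only delicate point is the case where $E^-$ or $E^+$ is not attained, as for open intervals. This is exactly where the empty-set convention and strict inequalities must be handled carefully. I expect no real obstacle beyond this case split. The degenerate case $E=\{a\}$ is consistent, since then $E^-=E^+=a$ and both comparisons are equalities.
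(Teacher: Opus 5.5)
Your proposal is correct and follows the same route as the paper: establish $\{E^-\}\leq^0 E\leq^0\{E^+\}$, then apply the monotonicity of $s$ and compatibility on singletons. Your case split on whether $E^-$ and $E^+$ belong to $E$ fills in the verification that the paper leaves as following from the definition of the symmetric order.
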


\begin{proof}  From the definition of symmetric order, we may state that $\{A^-\} \leq^0 A \leq^0 \{A^+\}$, and then, since $s$ is a  $\leq^0$-score, $s(\{A^-\})\leq s(A) \leq s(\{A^+\})$. By the compatibility property, $s(\{A^-\})=A^-$ and $s(\{A^+\})=A^+$, so boundedness occurs.
\end{proof}

In \cite{Alcantud}, one of the properties, regarded as natural for a score, is the property [SMU]. In the following definition, we extend it to any arbitrary mapping.

\begin{definition} \cite[Definition 9]{Alcantud} \label{def:SMU}
Let \( G \subseteq P^*([0,1]) \). A map \( s:G \to [0,1] \) is strongly  monotonic with respect to unions [SMU] if, for any \( X, Y\in G\) such that \(X \cup Y \in G \) and \( X < Y \), it holds that \( s(X) < s(X \cup Y) < s(Y) \).
\end{definition}

\begin{remark}
Using the same argument as in \cite[Lemma 1]{Alcantud}, we can prove that, if $G$ contains all the subintervals in [0,1], then there cannot exist a score in $G$ verifying [SMU]. Indeed, for each $a\in ]0,1]$, applying [SMU] to $[0,a[<\{a\}$, then $s([0,a[)<s([0,a])$ and there must exist a rational $q_a$ such that
$s([0,a[)<q_a<s([0,a])$. Given $a, b\in ]0,1]$ with $a<b$, applying [SMU] to $[0,a]<[0,b[$, we obtain that $s([0,a])<s([0,b[)$ and then $q_a<q_b$ so we have an injective map from $\mathbb{R}$ in $\mathbb{Q}$ in contradiction with the known fact of having $\mathbb{R}$ greather cardinal than $\mathbb{Q}$. 
\end{remark}

This moves us to define a weaker versión of this property.
\begin{definition} \label{def:WMU}
Let \( G \subseteq P^*([0,1]) \). A map \( s:G \to [0,1] \) is weakly  monotonic with respect to unions [WMU] if, for any \( X, Y\in G\) such that \(X \cup Y \in G \) and \( X < Y \), it holds that \( s(X) \leq s(X \cup Y) \leq s(Y) \).
\end{definition}


\begin{lemma}\label{lem:SMU}
Let $X,Y\in P^*([0,1])$ such that $X\cap Y = \emptyset$. The following are equivalent:
\begin{enumerate}[$i)$]
\item $X<Y$.
\item $X<^0 Y$.
\item $X<^0 X\cup Y$.
\item $X\cup Y <^0 Y$.
\end{enumerate}
\end{lemma}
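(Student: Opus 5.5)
I will prove the cycle of implications $i)\Rightarrow ii)$, $i)\Rightarrow iii)$, $i)\Rightarrow iv)$, and then show that each of $ii)$, $iii)$, $iv)$ implies $i)$. Throughout, $<^0$ denotes the strict part of $\leq^0$, that is, $A\leq^0 B$ and $A\neq B$. The workhorse is Lemma~\ref{L6}, and the disjointness hypothesis $X\cap Y=\emptyset$ fixes the set differences: $X\setminus Y=X$ and $Y\setminus X=Y$. Moreover, with $U=X\cup Y$ we have $X\cap U=X$, $U\setminus X=Y$, $X\setminus U=\emptyset$, and symmetrically $U\cap Y=Y$, $U\setminus Y=X$, $Y\setminus U=\emptyset$.

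\emph{From $i)$.} Assume $X<Y$.
\begin{itemize}
\item For $ii)$: since $X\cap Y=\emptyset$ and $X<Y$, the first alternative of Lemma~\ref{L6} gives $X\leq^0 Y$. Also $X\neq Y$, because both are nonempty and disjoint.
\item For $iii)$: the pair $(X,U)$ has nonempty intersection $X$. The second alternative of Lemma~\ref{L6} reads $\emptyset < X < Y$, which holds. Hence $X\leq^0 U$, and $X\neq U$ because $Y\neq\emptyset$ and $Y\cap X=\emptyset$.
\item For $iv)$: the pair $(U,Y)$ gives $X<Y<\emptyset$. This holds, since any statement of the form $S<\emptyset$ is trivially true. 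Also $U\neq Y$ because $X\neq\emptyset$.
\end{itemize}

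\emph{Converses.} For $ii)\Rightarrow i)$, apply Lemma~\ref{L6} to $X\leq^0 Y$. Since $X\cap Y=\emptyset$, only the first alternative can occur, and it yields $X<Y$. For $iii)\Rightarrow i)$, apply Lemma~\ref{L6} to $X\leq^0 U$. Here $X\cap U=X\neq\emptyset$, so the second alternative must hold, namely $\emptyset<X<Y$, which gives $X<Y$. Alternatively, one can read this directly from the definition: $X<U\setminus X=Y$. For $iv)\Rightarrow i)$, the definition of $U\leq^0 Y$ includes $U\setminus Y<Y$, that is, $X<Y$.

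\emph{Main obstacle.} There is no real obstacle. The only delicate point is bookkeeping with the empty set in Lemma~\ref{L6}: conditions like $\emptyset<A$ and $A<\emptyset$ hold vacuously, and nonemptiness of $X$ and $Y$ is what guarantees that all the inequalities are strict.
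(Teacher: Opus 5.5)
Your proof is correct and follows essentially the same route as the paper: both reduce everything to Lemma~\ref{L6} (or directly to the definition of $\leq^0$), using the identities $X\setminus(X\cup Y)=\emptyset$, $X\cap(X\cup Y)=X$, $(X\cup Y)\setminus X=Y$ and their symmetric counterparts, which hold because $X\cap Y=\emptyset$. Your explicit check that each relation is strict (for instance, $X\neq X\cup Y$ because $Y\neq\emptyset$) fills in a detail the paper leaves implicit.
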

\begin{proof}
By Lemma \ref{L6}, $i)$ and $ii)$ are equivalent. Now,  since $X\cap Y = \emptyset$,   \( X \setminus (X\cup Y) = \emptyset\), $X\cap (X\cup Y)=X$ and $(X\cup Y)\setminus X=Y$. Therefore  $i)$ is equivalent to $iii)$. Similarly, \( X= (X\cup Y) \setminus Y\), $Y= (X\cup Y)\cap Y$ and $\emptyset = Y\setminus (X\cup Y)$, so $i)$ is equivalent to $iv)$.
\end{proof}

The relation between $\leq^0$-scores and the [SMU] property is given as follows.

\begin{theorem}\label{prop:SMU}
Let $G\subseteq P^*([0,1])$ be a class closed by unions and differences, and 
 \( s:G \to [0,1] \) a map.  Then:
\begin{enumerate}[$i)$]
\item
$s$ satisfies the \emph{[WMU]} property if and only if it is a \( \leq^0 \)-score.
\item
$s$ satisfies the \emph{[SMU]} property if and only if it is a strong \( \leq^0 \)-score.
\end{enumerate} 
\end{theorem}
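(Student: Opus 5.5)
The plan is to prove both equivalences together, using Lemma~\ref{L6} and Lemma~\ref{lem:SMU} as the only tools. Items $i)$ and $ii)$ have the same structure, with $\leq$ replaced by $<$ in the second. Since $G\subseteq P^*([0,1])$, I read ``closed by differences'' as: $A\setminus B\in G$ whenever it is nonempty. Under that reading, $A\cap B=A\setminus(A\setminus B)$ also lies in $G$ whenever it is nonempty.

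\emph{Scores imply monotonicity with respect to unions.} Let $X,Y\in G$ with $X\cup Y\in G$ and $X<Y$. The strict inequality $X<Y$ forces $X\cap Y=\emptyset$, so Lemma~\ref{lem:SMU} applies and gives $X<^0 X\cup Y$ and $X\cup Y<^0 Y$. If $s$ is a $\leq^0$-score, this yields $s(X)\leq s(X\cup Y)\leq s(Y)$, which is [WMU]. If $s$ is a strong $\leq^0$-score, the same relations yield strict inequalities, which is [SMU].

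\emph{Monotonicity with respect to unions implies scores.} Let $A,B\in G$ with $A\leq^0 B$ (respectively $A<^0 B$), and split into the two cases of Lemma~\ref{L6}.
\begin{itemize}
\item If $A\cap B=\emptyset$ and $A<B$, then $A\cup B\in G$ by closure under unions. Applying [WMU] (respectively [SMU]) to $X=A$, $Y=B$ gives $s(A)\leq s(A\cup B)\leq s(B)$, with strict inequalities in the strong case.
\item Otherwise, put $C=A\cap B\neq\emptyset$, $D=A\setminus B$ and $E=B\setminus A$, so that $D<C<E$.
\begin{itemize}
\item If $D\neq\emptyset$, then $D,C\in G$ and $A=D\cup C$. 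Applying the property to $D<C$ and keeping the right-hand inequality gives $s(A)=s(D\cup C)\leq s(C)$. If $D=\emptyset$, then $A=C$ and $s(A)=s(C)$.
\item Symmetrically, if $E\neq\emptyset$, the pair $C<E$ gives $s(C)\leq s(C\cup E)=s(B)$. If $E=\emptyset$, then $s(C)=s(B)$.
\end{itemize}
Chaining the two steps gives $s(A)\leq s(B)$.
\end{itemize}
For the strong version, $A<^0 B$ means $A\neq B$, so at least one of $D$ and $E$ is nonempty. The corresponding step is then strict by [SMU], which gives $s(A)<s(B)$.

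The main difficulty is bookkeeping rather than ideas. One must check that every set to which [WMU] or [SMU] is applied is nonempty and lies in $G$, which is exactly where the closure hypotheses are used. One must also see that the degenerate cases, namely $A\subseteq B$ or $B\subseteq A$, where $D$ or $E$ is empty, are covered by the same chain. Finally, in the strong case one must ensure that the strictness coming from $A\neq B$ actually lands on a step where [SMU] is applied, which holds because that step is precisely the one with a nonempty piece $D$ or $E$.
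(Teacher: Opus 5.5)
Your proposal is correct and follows essentially the same route as the paper. The converse direction goes through Lemma~\ref{lem:SMU}, and the forward direction splits $A\leq^0 B$ via Lemma~\ref{L6} into the disjoint case and the chain $s(A)\leq s(A\cap B)\leq s(B)$ coming from $A\setminus B<A\cap B<B\setminus A$. Your uniform handling of the empty pieces replaces the paper's three cases ($A\subset B$, $B\subset A$, neither), and your explicit checks that $A\cap B\in G$ and that the strict step in the [SMU] case falls on a nonempty piece spell out what the paper leaves as ``entirely similar.''
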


\begin{proof}
$i)$  Suppose that $s$ is [WMU].   Let \( A ,B \in G\) such that \( A <^0 B \). If \( A \cap B = \emptyset \), then \( A < B \) and, since \( s \) is [WMU], it follows that \( s(A) \leq s(B) \). Otherwise,  if \( A \cap B \neq \emptyset \), we know that \( A \setminus B < A \cap B < B \setminus A \). We distinguish three cases. Firstly, if \( A \subset B \), since \( s \) is [WMU], from the inequality \( A = A \cap B < B \setminus A \), it follows that
    \[
    s(A) \leq s((B \setminus A) \cup A) = s(B) \leq s(B \setminus A)
    \]
    and the first inequality proves the condition. Secondly, if \( B \subset A \), since \( s \) is [WMU], from the inequality \( A \setminus B < A \cap B = B \), it follows that
    \[
    s(A \setminus B) \leq s((A \setminus B) \cup B) = s(A) \leq s(A \cap B) = s(B)
    \]
    and the second inequality proves the condition. Finally, if \( A \not\subset B \) and \( B \not\subset A \), let us denote
    \[
    X = A \setminus B \neq \emptyset, \, Y = A \cap B \neq \emptyset, \text{ and } Z = B \setminus A \neq \emptyset.
    \]
    Note that \( X, Y, Z \in G \) with \( X < Y < Z \), \( X \cup Y = A\) and  \(Y \cup Z = B \).
    Since \( X < Y \),
    $
    s(X) \leq s(X \cup Y) = s(A) \leq s(Y),
    $
    and, since \( Y < Z \),
    $
    s(Y) \leq s(Y \cup Z) = s(B) \leq s(Z),
    $
    from which we obtain \( s(A) \leq s(B) \).

    Conversely, suppose that \( s \) is a \( \leq^0 \)-\textit{score} and \( X < Y \).  Therefore $X\cap Y=\emptyset$ and, by Lemma \ref{lem:SMU}, $X<^0X\cup Y <^0Y$. Thus $s(X) \leq s(X\cup Y) \leq s(Y)$.
    
The proof of $ii)$ is entirely similar.    
\end{proof}

\begin{remark}\label{remarkF}
Observe that Theorem \ref{prop:SMU} is valid, in particular, for maps  $s:F^*([0,1])\to [0,1]$. Therefore it satisfies the [SMU] property if and only if it is a strong \( \leq^0 \)-score on $F^*([0,1])$.
\end{remark}

Another properties related to symmetric scores are the following.

\begin{definition} \cite[Definition 12]{Alcantud} \label{def:G}
Given \( G \subseteq P^*([0,1]) \), a map \( s \) on \( G \) satisfies the Gärdenfors property [G] if, for every \( A \in P^*([0,1]) \) and any \( x \not\in A \) with \( A, A \cup \{x\} \in G \), the following conditions hold:
\begin{enumerate}[$i)$]
\item $x < A^- \Rightarrow s(A \cup \{x\}) < s(A)$,
\item $A^+ < x \Rightarrow s(A) < s(A \cup \{x\})$.
\end{enumerate}
\end{definition}

We introduce here a weaker version of the property.

\begin{definition} \label{def:WG}
Given \( G \subseteq P^*([0,1]) \), a score \( s \) on \( G \) satisfies the weak Gärdenfors property [WG] if for every \( A \in P^*([0,1]) \) and any \( x \not\in A \) with \( A, A \cup \{x\} \in G \), the following conditions hold:
\begin{enumerate}[$i)$]
\item $x < A^- \Rightarrow s(A \cup \{x\}) \leq s(A)$,
\item $A^+ < x \Rightarrow s(A) \leq s(A \cup \{x\})$.
\end{enumerate}
\end{definition}

\begin{remark}\label{remarkG}
Observe that, if $x < A^-$, then $\{x\} < A$, and, if $A^+ < x$, then $A<\{x\}$. Thus, it is clear  that [WMU] implies [WG], and [SMU] implies [G]. 
\end{remark}

Whenever \( G \) is the set of all THFEs, the following statement hold.

\begin{proposition}\label{prop:WMU}
Let \( s:F^*([0,1]) \rightarrow [0,1] \) be a map. Then it is satisfied that:
\begin{enumerate}[$i)$]
\item 
The following conditions are equivalent:
\begin{enumerate}
    \item \( s \) is a \( \leq^0 \)-score.
    \item \( s \) verifies \emph{[WMU]}
    \item  \( s \) verifies \emph{[WG]}
\end{enumerate}
\item
The following conditions are equivalent:
\begin{enumerate}
    \item \( s \) is a strong \( \leq^0 \)-score.
    \item \( s \) verifies \emph{[SMU]}
    \item  \( s \) verifies \emph{[G]}
\end{enumerate}
\end{enumerate}
\end{proposition}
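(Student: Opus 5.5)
By Remark~\ref{remarkF}, Theorem~\ref{prop:SMU} applies to $F^*([0,1])$. Note that $F^*([0,1])$ is not closed under differences, since these may be empty; but the argument only uses nonempty differences, so the theorem still applies. Hence (a)$\Leftrightarrow$(b) holds in both $i)$ and $ii)$. Remark~\ref{remarkG} gives (b)$\Rightarrow$(c) in both parts. So the only new implication is (c)$\Rightarrow$(a): a map on finite sets satisfying [WG] (resp.\ [G]) is a $\leq^0$-score (resp.\ strong $\leq^0$-score). Equivalently, I will prove [WG]$\Rightarrow$[WMU] and [G]$\Rightarrow$[SMU] on $F^*([0,1])$.

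The plan is to build $X\cup Y$ from $X$ one point at a time, and likewise from $Y$. Let $X<Y$ be finite and nonempty, with $Y^*=(y_1,\dots,y_m)$. Set $X_0=X$ and $X_j=X\cup\{y_1,\dots,y_j\}$. At each step the new point $y_{j+1}$ is larger than $X_j^+$, because $X<Y$ and the $y$'s increase. Hence [WG] $ii)$ gives $s(X_j)\leq s(X_{j+1})$, and chaining these yields $s(X)\leq s(X\cup Y)$.

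Symmetrically, let $X^*=(x_1,\dots,x_k)$. Set $Y_0=Y$ and $Y_j=Y\cup\{x_k,x_{k-1},\dots,x_{k-j+1}\}$, adding the points of $X$ in decreasing order. Each new point is smaller than the current infimum, so [WG] $i)$ gives $s(Y_{j+1})\leq s(Y_j)$. Chaining yields $s(X\cup Y)\leq s(Y)$, which establishes [WMU]. For the strong version the same chains apply with [G], and every inequality is strict. Since $X$ and $Y$ are nonempty, each chain has at least one step, so the resulting inequalities are strict, giving [SMU].

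Finiteness is what makes the point-by-point induction terminate. This is exactly why the equivalence is stated for $F^*([0,1])$ and not for general $G$.

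The step needing most care is the applicability of Theorem~\ref{prop:SMU}. Rather than cite it directly, I would recheck its proof on $F^*([0,1])$. There, the sets $X=A\setminus B$, $Y=A\cap B$ and $Z=B\setminus A$ are finite, and they are only used when nonempty, so they lie in $F^*([0,1])$; unions of members of $F^*([0,1])$ stay in $F^*([0,1])$. The converse direction of that theorem uses Lemma~\ref{lem:SMU}, which holds for arbitrary nonempty subsets. With this checked, the cycle (a)$\Rightarrow$(b)$\Rightarrow$(c)$\Rightarrow$(b)$\Rightarrow$(a) closes for both parts.
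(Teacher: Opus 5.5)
Your proposal is correct and takes essentially the same route as the paper. You reduce to (c)$\Rightarrow$(b) via Remarks~\ref{remarkF} and~\ref{remarkG}, then build $X\cup Y$ from $X$ by adding the points of $Y$ in increasing order and from $Y$ by adding the points of $X$ in decreasing order, applying [WG] (resp.\ [G]) at each step. Your write-up is in fact slightly more careful than the paper's: you check that Theorem~\ref{prop:SMU} only uses nonempty differences, so it applies to $F^*([0,1])$ even though that class is not closed under differences, and your chain indexing is exact.
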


\begin{proof}
$i$) Taking into account remarks \ref{remarkF} and \ref{remarkG}, it only remains to prove that $(c)$ implies $(b)$.  
Let $X<Y$ and assume  that $X=\{x_1, \dots, x_r\}$ and $Y=\{y_1, \dots, y_s\}$ with $x_1<x_2<\dots <x_r<y_1<\dots <y_s$. Using [G] repeatedly, for every $i=1, \dots, r-1$,  
we have $ x_i < \{x_{i+1},\dots, x_r,y_1, \dots, y_s\}^-$ and we obtain that 
$s(\{x_i,\dots, x_r,y_1, \dots, y_s\})\leq s(\{x_{i+1},\dots, x_r,y_1, \dots, y_s\})$, and then $s(X\cup Y)\leq s(Y)$.

Similarly, for every $j=1,\dots, s$, $\{x_1,\dots, x_r, y_1, \dots, y_j\}^+<y_{j+1} $ and we obtain 
$s(\{x_1,\dots, x_r, y_1, \dots, y_j\})\leq s(\{x_1,\dots, x_r, y_1, \dots, y_j, y_{j+1}\})$. Therefore $s(X)\leq s(X\cup Y)$

  The proof of $ii)$ is similar to the previous one.
\end{proof}

Another important case is that of the interval-valued fuzzy elements. 
Let us denote by $I^c([0,1])$ the class of all closed subintervals in $[0,1]$.

\begin{definition} \cite[Definition 15]{Alcantud} \label{def:EM}
Let $G\subseteq P^*([0,1])$ be a class that contains $I^c([0,1])$, a map $s:G\to [0,1]$  is extremes monotonic [EM] when it satisﬁes the
following two conditions:
\begin{enumerate}
\item[1.] [EM1]. $0 \leq b < b' \leq 1$ implies $s([a, b]) < s([a, b' ])$ for each $a \in [0, b]$,
\item[2.] [EM2]. $0 \leq a < a' \leq 1$ implies $s([a, b]) < s([a' , b])$ for each $b \in [a' , 1]$.
\end{enumerate}
\end{definition}

\begin{proposition}\label{prop:WG}
Let \( s:I^c([0,1]) \rightarrow [0,1] \) be a map. 
 Then the following statements are satisfied.
 \begin{enumerate}[$i)$]
\item 
\( s \) is a \( \leq^0 \)-score if and only if \( s \) verifies that $a\leq c$ and $b\leq d$ implies $s([a,b])\leq s([c,d])$.
\item
\( s \) is a strong \( \leq^0 \)-score if and only if \( s \) verifies [EM].
\end{enumerate}
\end{proposition}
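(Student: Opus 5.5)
The plan is to reduce both statements to an explicit description of the symmetric order restricted to $I^c([0,1])$. The key step is the following claim: for closed intervals $A=[a,b]$ and $B=[c,d]$, including degenerate ones,
\[
[a,b]\leq^0[c,d] \iff a\leq c \text{ and } b\leq d .
\]
Once the claim is established, part $i)$ follows immediately from Definition~\ref{refScore}.

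To prove the claim I will work directly with the definition $A<B\setminus A$ and $A\setminus B<B$.

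For the forward direction, suppose first that $b>d$. Then $b\in A\setminus B$, since every element of $B$ is at most $d$. Moreover $d\in B$ and $d<b$, which contradicts $A\setminus B<B$. Suppose instead that $a>c$. Then $c\in B\setminus A$, since every element of $A$ is at least $a$. Moreover $a\in A$ and $c<a$, which contradicts $A<B\setminus A$.

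For the converse, assume $a\leq c$ and $b\leq d$. Every $x\in B\setminus A$ satisfies $x\geq c\geq a$ and $x\notin[a,b]$, so $x>b$; hence $A<B\setminus A$. Symmetrically, every $y\in A\setminus B$ satisfies $y\leq b\leq d$ and $y\notin[c,d]$, so $y<c$; hence $A\setminus B<B$.

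For part $ii)$, the claim shows that $[a,b]<^0[c,d]$ holds exactly when $a\leq c$, $b\leq d$, and at least one of these inequalities is strict.

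If $s$ is a strong $\leq^0$-score, [EM1] follows from $[a,b]<^0[a,b']$ for $b<b'$, and [EM2] follows from $[a,b]<^0[a',b]$ for $a<a'\leq b$.

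Conversely, assume $s$ satisfies [EM] and $[a,b]<^0[c,d]$. I will pass through the intermediate interval $[a,d]$, which is a genuine closed interval because $a\leq b\leq d$.
\begin{itemize}
\item If $b<d$, then [EM1] gives $s([a,b])<s([a,d])$, using $a\in[0,b]$. If $b=d$, the two sides are equal.
\item If $a<c$, then [EM2] gives $s([a,d])<s([c,d])$, using $d\in[c,1]$. If $a=c$, the two sides are equal.
\end{itemize}
Since at least one of the two inequalities is strict, $s([a,b])<s([c,d])$.

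The only point needing care is this chaining step, specifically checking that the intermediate interval $[a,d]$ is admissible and that the side conditions of [EM1] and [EM2] hold. Both checks are immediate from $a\leq b\leq d$ and $c\leq d$, so I do not expect a real obstacle.
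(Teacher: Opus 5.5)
Your proof is correct and follows the paper's route: reduce everything to the fact that $[a,b]\leq^0[c,d]$ if and only if $a\leq c$ and $b\leq d$, then read off both parts. The only differences are that you prove this characterization directly from the definition of $\leq^0$ instead of citing \cite[Theorem 44]{JMNS22}, and you write out the chaining through $[a,d]$ that the paper compresses into ``follows from $i)$ and Definition~\ref{def:EM}''.
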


\begin{proof}
$i)$ follows from \cite[Theorem 44]{JMNS22}, that is, given two closed intervals $[a,b]$ and $[b,c]$, $[a,b]\leq^0 [c,d]$ if and only if $a\leq c$ and $b\leq d$.  $ii)$ follows from $i)$ and Definition \ref{def:EM}.

\end{proof}

\section{Dominance functions}

In this section we introduce a class of functions named \emph{dominance functions} (DFs) relative to some order. Unlike traditional fuzzy scores that focus on direct evaluations, we try to incorporate one or more baseline requirements when evaluating each HFE. By specifying an “acceptable” or “ideal” set \(B\), dominance scores quantify how far a set \(A\) exceeds or fails to meet the expectations encoded in \(B\). This design is particularly useful in decision contexts that involve regulatory minimums or any application where a baseline constraint must be factored. In such scenarios, a DF provides a measure of whether a set is sufficiently above (dominates), below (dominated by) or indifferent with respect to the defined standard. The conceptual basis of this formulation lies in the classical conception of \emph{preference or dominance relations} established by Davis and Hinich~\cite{Davis1972} and Nitzan and Rubinstein~\cite{Nitzan1976}.

\begin{definition}[Dominance function relative to an order]
Let $G\subseteq  P^*([0,1]) \) and $\preccurlyeq$ an order on $G$. A dominance function on $G$ relative to $\preccurlyeq$ is a map 
\[
D : G \times G \to [0,1]
\]
that satisfies the following properties:
\begin{enumerate}[$i)$]
    \item \( D(X,X) = 0.5 \) for any $X\in G$, and,
    \item for each $X,Y,Z\in G$, if \( Y \preccurlyeq Z \), then \( D(X,Y) \leq D(X,Z) \).
\end{enumerate}
Let $X ,Y,Z\in G$, we say that $Y$ dominates $Z$ in relation to $X$ if $D(X,Y)>D(X,Z)$. We say that $Y$ and $Z$ are indifferent in relation to $X$ if $D(X,Y) = D(X,Z)$.
\end{definition}

Observe that to give a dominance function is equivalent to give a family $\{D_X\}_{X\in G}$ of $\preccurlyeq$-scores satisfying that $D_X(X)=0.5$ for all $X\in G$. The underlying idea of dominance function is to use a control  set $X$ in such a way the higher the dominance value, the better the evaluated set is in relation to this acceptable set \( X \), in the sense that it further exceeds the minimum acceptable expectations.

Next, we show two specific examples of dominance functions on THFEs.  Given two elements $x,y \in [0,1]$, we consider 
\begin{equation}     s(x,y)     \;=\;     \begin{cases}     1, & \text{if } x < y,\\     0.5, & \text{if } x = y,\\     0, & \text{if } x > y,    \end{cases}  
\end{equation} 
and \begin{equation}r(x,y) = \frac{1}{2}(y-x+1).\end{equation}

\begin{definition}
\label{def:DDS}
Let \(A, B \in F^*([0,1])\), we define the functions 
$S,R: F^*([0,1]) \times F^*([0,1]) \to [0,1]$ 
as  $$S(A,B) =  \frac{1}{\#A \cdot   \#B}\sum_{a\in A}\sum_{b\in B} s(a,b).$$
$$R(A,B) =\frac{1}{\#A \cdot   \#B}\sum_{a\in A}\sum_{b\in B} r(a,b).$$
\end{definition}

Alternatively,  if  \(A = \{a_1, \dots, a_n\}\) with $a_1<a_2<\cdots < a_n$ and \(B = \{b_1, \dots, b_m\}\) with $b_1<b_2<\cdots < b_m$,  we may construct the $n\times m$ matrices $S_{A,B} = \left (s(a_i,b_j) \right )_{i,j}$  and $R_{A,B} = \left (r(a_i,b_j) \right )_{i,j}$. 

\begin{theorem}
The functions $R$ and $S$ are dominance functions on $F^*([0,1])$ relative to the list order and the symmetric order.
\end{theorem}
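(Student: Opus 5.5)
The plan is to check the two axioms of a dominance function separately, with the control set in the first argument and the evaluated set in the second. The key observation is that both $R(X,\cdot)$ and $S(X,\cdot)$ are arithmetic means, over the evaluated set, of a single nondecreasing function of one real variable.

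\emph{Step 1: $D(X,X)=0.5$.} For any $x,y\in[0,1]$ we have $s(x,y)+s(y,x)=1$ and $r(x,y)+r(y,x)=1$, and also $s(x,x)=r(x,x)=0.5$. Summing over all ordered pairs in $A\times A$ and pairing $(a,b)$ with $(b,a)$ gives total $(\#A)^2/2$. Hence $S(A,A)=R(A,A)=0.5$.

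\emph{Step 2: reduction to means.} Fix $X$. Write
\[
g_X(b)=\frac{1}{\#X}\sum_{x\in X}s(x,b), \qquad h_X(b)=\frac{1}{\#X}\sum_{x\in X}r(x,b)=\tfrac12\bigl(b-m(X)+1\bigr).
\]
Then $S(X,B)$ is the arithmetic mean of $g_X$ over $B$, and $R(X,B)$ is the arithmetic mean of $h_X$ over $B$. Each summand $s(x,\cdot)$ and $r(x,\cdot)$ is nondecreasing on $[0,1]$, so $g_X$ and $h_X$ are nondecreasing, with values in $[0,1]$. It therefore suffices to prove the following claim.

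\emph{Claim.} For every nondecreasing $f:[0,1]\to[0,1]$, the map $B\mapsto \frac{1}{\#B}\sum_{b\in B}f(b)$ is both a $\leq_{\text{list}}$-score and a $\leq^0$-score on $F^*([0,1])$.

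\emph{Step 3: the list order.} Suppose $B\leq_{\text{list}}C$.
\begin{itemize}
\item If $B=\{0\}$, then $f(0)\le f(c)$ for every $c\in C$, so $f(0)$ is at most the mean of $f$ over $C$.
\item The case $C=\{1\}$ is symmetric.
\item Otherwise $\#B=\#C$ and $b_i\le c_i$ componentwise. Then $f(b_i)\le f(c_i)$ for each $i$, and averaging gives the inequality.
\end{itemize}

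\emph{Step 4: the symmetric order.} This is the main step. The paper's earlier proof that the arithmetic mean is a $\leq^0$-score used strict inequalities between disjoint blocks. Here $f$ may be constant on stretches, so I would redo that argument with weak inequalities. The tool is Lemma~\ref{L6}.
\begin{itemize}
\item If $B\cap C=\emptyset$ and $B<C$, then every value $f(b)$ is at most every value $f(c)$, hence the means compare.
\item Otherwise $U=B\setminus C$, $V=B\cap C\neq\emptyset$ and $W=C\setminus B$ satisfy $U<V<W$. Monotonicity of $f$ gives $\mathrm{avg}_U f\le \mathrm{avg}_V f\le \mathrm{avg}_W f$ for whichever of these blocks are nonempty.
\end{itemize}
In the second case, the mean over $B=U\cup V$ is a convex combination of $\mathrm{avg}_U f$ and $\mathrm{avg}_V f$, so it is at most $\mathrm{avg}_V f$. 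Likewise the mean over $C=V\cup W$ is a convex combination of $\mathrm{avg}_V f$ and $\mathrm{avg}_W f$, so it is at least $\mathrm{avg}_V f$. This yields the required inequality, and the degenerate cases $U=\emptyset$ or $W=\emptyset$ are covered automatically.

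Applying the claim with $f=g_X$ and $f=h_X$ gives axiom $ii)$ for $S$ and $R$ relative to both orders, which completes the plan.
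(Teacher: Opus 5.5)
Your proof is correct, and the core ingredients match the paper's: monotonicity of $s(x,\cdot)$ and $r(x,\cdot)$, and the fact that an average over a disjoint union is a convex combination of the block averages. The organisation differs, though.

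The paper works with the matrices $S_{A,B}$ and $R_{A,B}$ directly. For the list order it compares entries, and it handles the boundary cases $B=\{0\}$ and $C=\{1\}$ by explicit computation; for $S$ this needs a split on whether $0\in A$. For the symmetric order it only checks [WMU]: for disjoint $B<C$, $S(A,B\cup C)$ is the convex combination $\frac{rS(A,B)+tS(A,C)}{r+t}$. It then invokes Proposition~\ref{prop:WMU} (via Theorem~\ref{prop:SMU}) to turn [WMU] into the $\leq^0$-score property.

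You instead collapse the control set into one nondecreasing kernel, $g_X$ or $h_X$. Both dominance functions then become arithmetic means, over the evaluated set, of a monotone function. You prove directly, through Lemma~\ref{L6}, that such means are $\leq_{\text{list}}$- and $\leq^0$-scores. Your three-block argument with weak inequalities is the one inside the proof of Theorem~\ref{prop:SMU}, inlined and specialised to averages.

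What your route gains:
\begin{itemize}
\item One argument covers $S$ and $R$ at once.
\item The list-order boundary cases become trivial, since monotonicity of $f$ settles $B=\{0\}$ and $C=\{1\}$ immediately.
\item The result extends to any averaged monotone kernel.
\item For $R$ it exposes the closed form $R(X,B)=\tfrac12\bigl(m(B)-m(X)+1\bigr)$, so that case reduces to the arithmetic mean being a score, which the paper already proved.
\end{itemize}
The paper's route is shorter given the machinery it has already built, and it showcases the [WMU] characterisation developed in the preceding section.
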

\begin{proof}
It is easy to check that $S(X,X)=R(X,X) = 0.5$ for all $X\in F^*([0,1])$. Let us prove that, for any $A,B,C\in F^*([0,1])$, $S(A,B)\leq S(A,C)$ whenever $B\leq_{list} C$. Assume first that \(\#B = \#C = n\). Since \(b_i \leq c_i\) for each \(i\), we compare the elements of the matrices $S_{A,B}$ and $S_{A,C}$. Let \(a \in A\):
\begin{itemize}
\item  If \(a < b_i\), since \(b_i \leq c_i\), \(a < c_i\).  Then $1= s(a,b_i) = s(a,c_i)$.
\item If \(a = b_i\), then \(a \leq c_i\).  
  So $0.5 = s(a,b_i) \leq s(a,c_i)$.
\item If \(a > b_i\), then \(0 = s(a,b_i) \leq s(a,c_i) \).  
\end{itemize}
Consequently, $S(A,B) \leq S(A,C)$.

Suppose now that \(\#B \neq \#C\), therefore \(B = \{0\}\) or \(C = \{1\}\). Suppose that  $B=\{0\}$. If $0\notin A$, then $S(A,B)=0$, so  it holds. Otherwise, if $0\in A$, $\# A \cdot S(A,B) = 0.5$. Observe that  $s(0,c)=1$ for all $c\in C$, unless for the possible case $c=0$, whose value in 0.5. Then $ \sum_{c\in C} s(0,c) \geq 0.5 + \# C -1 = \#C - 0.5$. Therefore $$\# A \cdot S(A,C) = \frac{\sum_{c\in C} s(0,c)+ \Sigma}{\#C} \geq \frac{\# C -0.5 + \Sigma}{\# C} \geq 0.5,$$
where $\Sigma$ is the sum of the values $s(a,c)$ for any elements $0\not = a \in A$ and $c\in C$. So, the inequality holds. The case $C=\{1\}$ may be proved similarly. Thus $S$ is a dominance function relative to the list order.

With respect to the list order, suppose that $B\leq_{list} C$.
 Assume first that \(\#B = \#C = n\). Since \(b_i \leq c_i\) for each \(i\), then $$r(a_j,b_i)=\frac{1}{2}(b_i-a_j+1) \leq \frac{1}{2}(c_i-a_j+1)=r(a_j,c_i).$$
 Then, clearly, $R(A,B)\leq R(A,C)$.
 
 On the other hand, if $\#B \not = \# C$, hence, necessarily, $B=\{0\}$ or $C=\{1\}$. If $B=\{0\}$,
$$R(A,B)=\displaystyle \frac{1}{\#A} \sum_{a\in A} \frac{1}{2}(1-a).$$
But,
$$\begin{array}{rl}
R(A,C) & =\displaystyle \frac{1}{\#A\cdot  \# C} \sum_{a\in A, c\in C} \frac{1}{2} (c-a+1)  \\ & \geq \displaystyle \frac{1}{\#A \cdot \# C} \sum_{a\in A, c\in C} \frac{1}{2} (1-a) \\  & =\displaystyle \frac{1}{\#A} \sum_{a\in A} \frac{1}{2 }(1-a) \\ & = R(A,B).
\end{array}$$
 The case $C=\{1\}$ may be proved similarly.

Let us now prove that $S$ and $R$ are dominance functions relative to the symmetric order. By Proposition \ref{prop:WMU}, this is equivalent to prove that $S$ and $R$ are [WMU].  Let $A,B,C\in F^*([0,1])$ with $B< C$.  For all $a\in A$, all $b\in B$ and all $c\in C$, $s(a,b)\leq s(a,c)$ and $r(a,b)\leq r(a,c)$. So the mean of all numbers $s(a,b)$ is lower or equal than the mean of all numbers $s(a,c)$. That is, $S(A,B) \leq S(A,C)$. For the same reason, $R(A,B)\leq R(A,C)$. Now, if $r=\# B$ and $t = \# C$,
$$\begin{array}{rl}
s(A,B\cup C) & = \displaystyle \frac{1}{\#A \cdot (r+t)}\sum_{a\in A} \sum_{x\in B\cup C} s(a,x) \\
& \\
& = \displaystyle \frac{1}{r+ t} \left [ \left (\frac{1}{\#A} \sum_{a\in a} \sum_{b\in B} s(a,b) \right ) + \left (\frac{1}{\#A} \sum_{a\in a} \sum_{c\in C} s(a,c) \right ) \right ] \\
& \\
& = \displaystyle \frac{r \cdot s(A,B) + t \cdot  s(A,C)}{r+t}
\end{array}
$$
Then,
$$S(A,B) \leq  S(A,B\cup C)  =\frac{r \cdot S(A,B) + t \cdot S(A,C)}{r+ t}
 \leq S(A,C).$$
 
 This reasoning is also valid for the map $R$.
\end{proof}
 We call $S$ the discrete dominance function (DDF), and $R$ the relative dominance function (RDF). In general, the choice of a particular function depends on the characteristics of the system. For instance, the DDF appears to be more appropriate for a winner-take-all electoral system, whilst the RFD seems more suited for a proportional representation system. 
 
 Next, we present a numerical example.

\begin{example}\label{ExDDS}
Given \( A=\{0.3, 0.4\} \), \( B=\{0.1, 0.2, 1\} \) and \( C=\{0.2, 0.4\} \).  We calculate the matrices  
     $$ S_{A,B}=\left (\begin{array}{ccc} 0 & 0 & 1\\ 0& 0&  1\end{array} \right)  \text{ and }
      S_{A,C}=\left (\begin{array}{cc} 0& 1\\ 0& 0.5\end{array}\right ).$$
Thus, the DDF values are \( S(A,B)=0.33 \), and \( S(A,C)=0.375 \).
Therefore, in this case, the ranking of the alternatives, relative to the ideal set \( A \), indicates that $C$ is preferable to $B$. 

Now we may compute the matrices
     $$ R_{A,B}=\left (\begin{array}{ccc} 0.4 & 0.45 & 0.85\\ 0.35& 0.4& 0.8\end{array} \right) \text{ and }
     R_{A,C}=\left (\begin{array}{cc} 0.45& 0.55\\ 0.4&  0.5 \end{array}\right )$$ 
and the RDF values are $R(A,B)=0.541$ and  $R(A,C)=0.47$.  Therefore, in this case, the set $B$ is preferable to $C$ with respect to the set $A$.
\end{example}

We state now that both the DDF and the RDF satisfy certain complementary properties, which are outlined below.

\begin{lemma}\label{lem:SBplusOneMinusSB}
For every \( A, B \in F^*([0,1]) \), it is satisfied 
\[
S(A,B) + S(B,A) = 1 \text{ and } S(1-A,1-B) = S(B,A),
\]
and 
\[
R(A,B) + R(B,A) = 1 \text{ and } R(1-A,1-B) = R(B,A).
\]
\end{lemma}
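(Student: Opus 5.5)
The plan is to reduce all four identities to pointwise identities for the kernels $s$ and $r$, and then sum them over the double index set $A\times B$. The normalizing factor $\frac{1}{\#A\cdot\#B}$ is symmetric in $A$ and $B$. The maps $a\mapsto 1-a$ and $b\mapsto 1-b$ are bijections $A\to 1-A$ and $B\to 1-B$, so $\#(1-A)=\#A$ and $\#(1-B)=\#B$. Hence each identity for $S$ and $R$ follows from the corresponding identity for $s$ and $r$ by averaging.

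First, we check that $s(x,y)+s(y,x)=1$ for all $x,y\in[0,1]$. If $x<y$ the two terms are $1$ and $0$; if $x=y$ they are both $0.5$; if $x>y$ they are $0$ and $1$. Similarly,
\[
r(x,y)+r(y,x)=\tfrac12(y-x+1)+\tfrac12(x-y+1)=1 .
\]
Then
\[
S(A,B)+S(B,A)=\frac{1}{\#A\cdot\#B}\sum_{a\in A}\sum_{b\in B}\bigl(s(a,b)+s(b,a)\bigr)=\frac{\#A\cdot\#B}{\#A\cdot\#B}=1 .
\]
Here we use that $S(B,A)$ is the same double sum with the roles interchanged, which is legitimate because finite sums may be reordered. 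The same computation works verbatim for $R$.

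Second, for the negation identities, we verify that $s(1-x,1-y)=s(y,x)$. This holds because $1-x<1-y$ if and only if $y<x$, and equality is preserved. Likewise,
\[
r(1-x,1-y)=\tfrac12\bigl((1-y)-(1-x)+1\bigr)=\tfrac12(x-y+1)=r(y,x).
\]
Reindexing the sum defining $S(1-A,1-B)$ through the bijections $a\mapsto 1-a$ and $b\mapsto 1-b$ gives
\[
S(1-A,1-B)=\frac{1}{\#A\cdot\#B}\sum_{a\in A}\sum_{b\in B}s(1-a,1-b)=\frac{1}{\#A\cdot\#B}\sum_{a\in A}\sum_{b\in B}s(b,a)=S(B,A),
\]
and the same argument gives $R(1-A,1-B)=R(B,A)$.

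There is no real obstacle. The only point requiring care is the reindexing step: we need that $1-A$ has the same cardinality as $A$, which follows because $x\mapsto 1-x$ is injective. With that, the normalizing constants in $S(1-A,1-B)$ and $S(B,A)$ coincide.
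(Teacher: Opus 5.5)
Your proof is correct and takes the same approach as the paper, which also derives all four identities from the pointwise facts $s(x,y)+s(y,x)=1$, $r(x,y)+r(y,x)=1$, $s(1-x,1-y)=s(y,x)$ and $r(1-x,1-y)=r(y,x)$. You additionally spell out the averaging and the reindexing through the bijection $x\mapsto 1-x$, including $\#(1-A)=\#A$, which the paper leaves implicit.
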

\begin{proof}
The properties follow straightforwardly from the facts $s(x,y)+s(y,x) = 1$, $r(x,y)+r(y,x) = 1$, $s(1-x,1-y) = s(y,x)$ and $r(1-x,1-y) = r(y,x)$ for all $x,y\in [0,1]$.
\end{proof}

\subsection{Construction of preference relations}

Advanced consistency analyses for fuzzy preference relations, as explored by Xu \cite{Xu2015}, provide critical information to improve the effectiveness of group decision making. In this context, dominance scores offer a useful approach to construct preference relations in group decision-making problems. 
Originally, Orlovsky \cite{Orlo93} defined the concept of a fuzzy preference relation as follows.

\begin{definition} \cite[Definition 11]{Chen2013} \cite{Orlo93} 
Let \( X = \{x_1, \dots, x_n\} \) be a set of discrete alternatives. A fuzzy preference relation \( R \) on \( X \) is represented by a matrix \( R = (r_{ij})_{n \times n}\) with coefficients in $[0,1]$, such that:
\begin{enumerate} [$i)$]
    \item \( r_{ij} \geq 0 \), 
    \item \( r_{ij} + r_{ji} = 1 \), 
    \item \( r_{ii} = 0.5 \) for all \( i, j = 1, \dots, n \), 
\end{enumerate} 
where \( r_{ij} \) denotes the preference degree of the alternative \( x_i \) over \( x_j \).
\end{definition}

In this structure, \( r_{ij} = 0.5 \) indicates indifference between \( x_i \) and \( x_j \), while \( r_{ij} > 0.5 \) suggests a preference for \( x_i \) over \( x_j \), and \( r_{ij} < 0.5 \) indicates a preference for \( x_j \) over \( x_i \). The larger the value of \( r_{ij} \), the stronger the preference for \( x_i \) over \( x_j \).

\begin{proposition}\label{thm:PrefRelation}
Let $X=\{A_1,\ldots , A_n\}$, where $A_i\in F^*([0,1])$. Then $S$ and $R$ determine preference relations on $X$ given by the matrices $\left (S(A_i,A_j) \right )_{i,j\in \{1,\ldots ,n\}}$  and $\left (R(A_i,A_j) \right )_{i,j\in \{1,\ldots ,n\}}$, respectively.
\end{proposition}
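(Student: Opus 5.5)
The plan is to check the three defining conditions of a fuzzy preference relation directly for the matrices $M^S=(S(A_i,A_j))_{i,j}$ and $M^R=(R(A_i,A_j))_{i,j}$.

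First, every entry lies in $[0,1]$, so condition $i)$ holds. For $S$ this is because each $s(a,b)\in\{0,0.5,1\}$, and $S(A,B)$ is the arithmetic mean of these $\#A\cdot\#B$ values. For $R$, note that $a,b\in[0,1]$ gives $b-a\in[-1,1]$, so $r(a,b)=\frac12(b-a+1)\in[0,1]$. Since $R(A,B)$ is a mean of such values, it also lies in $[0,1]$.

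Second, the reciprocity condition $ii)$, namely $S(A_i,A_j)+S(A_j,A_i)=1$ and $R(A_i,A_j)+R(A_j,A_i)=1$, is exactly the first identity of Lemma~\ref{lem:SBplusOneMinusSB}, applied to $A=A_i$, $B=A_j$. The lemma itself rests on $s(x,y)+s(y,x)=1$ and $r(x,y)+r(y,x)=1$, summed over all pairs. The normalising factor $\frac{1}{\#A\cdot\#B}$ is symmetric in $A$ and $B$, and $(a,b)\mapsto(b,a)$ is a bijection between $A\times B$ and $B\times A$.

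Third, the diagonal condition $iii)$, $S(A_i,A_i)=R(A_i,A_i)=0.5$, is property $i)$ in the definition of dominance function. The preceding theorem shows that $S$ and $R$ are dominance functions, so this holds. Alternatively, it follows from condition $ii)$ with $i=j$, since $2S(A_i,A_i)=1$.

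No step is a genuine obstacle. The only point needing care is that the alternatives are indexed: if some $A_i=A_j$ with $i\neq j$, the entries are still $0.5$ by the same identity. The remaining content is just bookkeeping that identifies each axiom with an earlier result.
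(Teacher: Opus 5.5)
Your proof is correct and follows the paper's route. The paper's proof is the one line ``It follows from Lemma~\ref{lem:SBplusOneMinusSB},'' and you spell out the routine checks it leaves implicit: entries lie in $[0,1]$, and the diagonal entries equal $0.5$ because $2S(A_i,A_i)=1$ and $2R(A_i,A_i)=1$.
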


\begin{proof} It follows from Lemma \ref{lem:SBplusOneMinusSB}.
 \end{proof}

Let us consider an example of how a preference relation can be constructed using DSFs.

\begin{example}\label{prefRel} 
Suppose we have three alternatives with values $A_1 = \{0.7, 0.8\}$,  $A_2 = \{0.75, 0.8\}$ and  $A_3 = \{0.8, 0.9\}$.

Using the DDF, we construct a preference matrix 
\[ P  = \left (S(A_i,A_j) \right )_{i,j=1,2,3}=
\begin{pmatrix}
0.5 & 0.375 & 0.125 \\
0.625 & 0.5 & 0.125 \\
0.875 & 0.875 & 0.5
\end{pmatrix}
\]
Similarly, applying the RDF, we obtain a different preference matrix that incorporates distance-based evaluations between elements.
\[ Q  = \left (R(A_i,A_j) \right )_{i,j=1,2,3}=
\begin{pmatrix}
0.5 & 0.5125 & 0.55 \\
0.4875 & 0.5 & 0.4625 \\
0.45 & 0.5375 & 0.5
\end{pmatrix}
\]

\end{example}

\subsection{Example: application of dominance functions for project evaluation}

In this subsection, we present an illustrative model to demonstrate the application of dominance functions. The model simulates a decision-making context where several projects are evaluated relative to predefined baseline expectations across different criteria. The setting involves three evaluation criteria for each project:
\begin{itemize}
    \item C1: Innovation level,
    \item C2: Feasibility,
    \item C3: Sustainability.
\end{itemize}

A control group of expert evaluators establishes baseline THFEs for each criterion, representing minimal acceptable standards. Four projects are then evaluated based on expert assessments, where each evaluation is modeled as a THFE. The dominance of each project with respect to the baseline is assessed using both the DDF and the RDF. The baseline THFEs for each criterion are given by:
\begin{itemize}
    \item C1 (Innovation level baseline): $\{0.7, 0.8\}$,
    \item C2 (Feasibility baseline): $\{0.7\}$,
    \item C3 (Sustainability baseline): $\{0.5, 0.6\}$.
\end{itemize}

The evaluation THFEs for the four projects are as follows:

\begin{table}[!h]
\centering
\begin{tabular}{|c|c|c|c|}
\hline
\textbf{Project} & \textbf{C1} & \textbf{C2} & \textbf{C3} \\ \hline
P1 & $\{0.6, 0.7\}$ & $\{0.6, 0.7\}$ & $\{0.6\}$ \\ \hline
P2 & $\{0.5, 0.6\}$ & $\{0.5\}$ & $\{0.4, 0.5\}$ \\ \hline
P3 & $\{0.8\}$ & $\{0.6, 0.7\}$ & $\{0.5, 0.6\}$ \\ \hline
P4 & $\{0.5\}$ & $\{0.3, 0.4\}$ & $\{0.4\}$ \\ \hline
\end{tabular}
\caption{Project evaluation THFEs for each criterion.}
\label{tab:data-toy}
\end{table}

The dominance functions obtained for each project using the DDF and the RDF are summarized below in Tables \ref{tab:scoresddf} and \ref{tab:scoresrdf}, respectively:

\begin{table}[!h]
\centering
\begin{tabular}{|c|c|c|c|}
\hline
\textbf{Project} & \textbf{DDF (C1)} & \textbf{DDF (C2)} & \textbf{DDF (C3)} \\ \hline
P1 & 0.125 & 0.25 & 0.75 \\ \hline
P2 & 0 & 0 & 0.125 \\ \hline
P3 & 0.75 & 0.25 & 0.5 \\ \hline
P4 & 0 & 0 & 0 \\ \hline
\end{tabular}
\caption{DDF  by project and criterion.}
\label{tab:scoresddf}
\end{table}

\vspace{0.5cm}

\begin{table}[!h]
\centering
\begin{tabular}{|c|c|c|c|}
\hline
\textbf{Project} & \textbf{RDF (C1)} & \textbf{RDF (C2)} & \textbf{RDF (C3)} \\ \hline
P1 & 0.45 & 0.475 & 0.525 \\ \hline
P2 & 0.4 & 0.4 & 0.45 \\ \hline
P3 & 0.525 & 0.475 & 0.5 \\ \hline
P4 & 0.375 & 0.325 & 0.425 \\ \hline
\end{tabular}
\caption{RDF by project and criterion.}
\label{tab:scoresrdf}
\end{table}

\vspace{0.5cm}

The average values for each project are calculated separately for each dominance function values in order to rank the projects and derive a final decision. Nevertheless, any other aggregation function could be used. The corresponding rankings are summarized in Tables \ref{tab:rankings-rdf-toy} and \ref{tab:rankings-rdf-toy2}.

\begin{table}[!h]
\centering
\begin{tabular}{|c|c|c|}
\hline
\textbf{Rank (DDF)} & \textbf{Project} & \textbf{Average (DDF)} \\ \hline
1 & P3 & 0.5 \\ \hline
2 & P1 & 0.375 \\ \hline
3 & P2 & 0.0417 \\ \hline
4 & P4 & 0 \\ \hline
\end{tabular}
\caption{Project rankings based on average DDF.}
\label{tab:rankings-rdf-toy}
\end{table}

\begin{table}[!h]
\centering
\begin{tabular}{|c|c|c|}
\hline
\textbf{Rank (RDF)} & \textbf{Project} & \textbf{Average (RDF)} \\ \hline
1 & P3 & 0.5 \\ \hline
2 & P1 & 0.4833 \\ \hline
3 & P2 & 0.4167 \\ \hline
4 & P4 & 0.375 \\ \hline
\end{tabular}
\caption{Project rankings based on average RDF.}
\label{tab:rankings-rdf-toy2}
\end{table}

Observe that the DDF strictly penalizes any failure to meet the baseline values, resulting in sharper rankings that favor only those projects that fully satisfy the standards. In contrast, the RDF introduces a more continuous adjustment by considering the proximity to the baselines, leading to a more tolerant and nuanced ranking. The observed differences emphasize the importance of selecting the appropriate dominance function depending on the evaluation objectives.

\section{Conclusions} 

Several studies, see for instance \cite{Stateofart_HFS,Rodriguez2015}, highlight the importance of integrating hesitant fuzzy set theory into practical frameworks. This paper responds to that need by proposing an order-theoretic approach to scoring HFEs. Our framework expands the class of scores and clarifies their behavior through their explicit dependence on underlying order structures.
In this sense,  we have introduced the notion of \(\preccurlyeq\)-scores, formally defined with respect to arbitrary orders. This generalization moves beyond conventional list-order assumptions and accommodates a broader range of comparison criteria. By analyzing classical and alternative orders, we have shown how the order compatibility critically shapes score properties. Concretely, the symmetric order yields scores satisfying most of the essential normative properties introduced in \cite{Alcantud}. As an application we have introduced the dominance functions aiming to improve the expressive power of decision-making hesitant fuzzy models.

\section*{Statements and Declarations}

\bmhead{Conflict of Interest} 
The authors declare that they have no conflicts of interest.

\bmhead{Ethics Approval} This is a theoretical study that does not involve human participants, animal subjects, or sensitive data.

\bmhead{Funding} 
This work is partially supported by the “María de Maeztu” Excellence Unit IMAG, reference CEX2020-001105-M, funded by MCIN/AEI/10.13039/501100011033/.

\bmhead{Data Availability} 
Not applicable.

\bmhead{Author Contribution} 
All authors contributed equally to the conception, development, and writing of this manuscript. All authors have read and approved the final version.

\bmhead{Acknowledgement} 
Not applicable.

\bmhead{Human Participants and/or Animals} 
This study does not include any research involving human participants or animals.




\end{document}